\tikzstyle{int}=[draw, fill=black!10, minimum size=5em,thick]
\tikzstyle{init} = [pin edge={to-,thick,black}]
\tikzset{add reference/.style={insert path={%
			coordinate [pos=0,xshift=-0.5\pgflinewidth,yshift=-0.5\pgflinewidth] (#1 south west) 
			coordinate [pos=1,xshift=0.5\pgflinewidth,yshift=0.5\pgflinewidth]   (#1 north east)
			coordinate [pos=.5] (#1 center)                        
			(#1 south west |- #1 north east)     coordinate (#1 north west)
			(#1 center     |- #1 north east)     coordinate (#1 north)
			(#1 center     |- #1 south west)     coordinate (#1 south)
			(#1 south west -| #1 north east)     coordinate (#1 south east)
			(#1 center     -| #1 south west)     coordinate (#1 west)
			(#1 center     -| #1 north east)     coordinate (#1 east)   
}}}
\pgfplotsset{compat=newest}
\pgfplotsset{every axis/.append style={
		label style={font=\Large},
		tick label style={font=\large}  
}}
\newcommand{\orcid}[1]{\href{https://orcid.org/#1}{\includegraphics[scale=0.04]{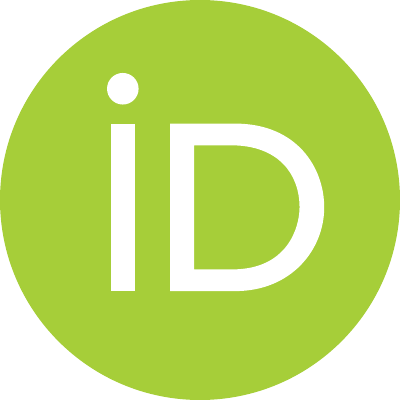}}} % link to ORCID
\newtheorem{thm}{Theorem}
\newtheorem{prob}{Problem}
\newtheorem{ass}[thm]{Assumption}
\newtheorem{definition}[thm]{Definition}
\newtheorem{rem}[thm]{Remark}
\newtheorem{ex}[thm]{Example}
\Crefname{ass}{Assumption}{Assumptions}
\Crefname{prob}{Problem}{Problems}
\newcommand{\bdmath}{\begin{dmath}}
\newcommand{\edmath}{\end{dmath}}
\newcommand{\beq}{\begin{equation}}
\newcommand{\eeq}{\end{equation}}
\newcommand{\bdm}{\begin{displaymath}}
\newcommand{\edm}{\end{displaymath}}
\newcommand{\bea}{\begin{eqnarray}}
\newcommand{\eea}{\end{eqnarray}}
\newcommand{\beal}{\beq \begin{array}{ll}}
\newcommand{\eeal}{\end{array} \eeq}
\newcommand{\beas}{\begin{eqnarray*}}
\newcommand{\eeas}{\end{eqnarray*}}
\newcommand{\ba}{\begin{array}}
\newcommand{\ea}{\end{array}}
\newcommand{\bit}{\begin{itemize}}
\newcommand{\eit}{\end{itemize}}
\newcommand{\ben}{\begin{enumerate}}
\newcommand{\een}{\end{enumerate}}
\newcommand{\lr}{\left(}
\newcommand{\rr}{\right)}
\newcommand{\lb}{\left\lbrace}
\newcommand{\rb}{\right\rbrace}
\newcommand{\calE}{{\cal E}}
\newcommand{\calG}{{\cal G}}
\newcommand{\calI}{{\cal I}}
\newcommand{\calK}{{\cal K}}
\newcommand{\calL}{{\cal L}}
\newcommand{\calM}{{\cal M}}
\newcommand{\calP}{{\cal P}}
\newcommand{\calR}{{\cal R}}
\newcommand{\calS}{{\cal S}}
\newcommand{\calT}{{\cal T}}
\newcommand{\calU}{{\cal U}}
\newcommand{\calV}{{\cal V}}
\newcommand{\calX}{{\cal X}}
\newcommand{\eg}{\emph{e.g.,}\xspace}
\newcommand{\ie}{\emph{i.e.,}\xspace}
\newcommand{\cf}{\emph{cf.}\xspace}
\newcommand{\myParagraph}[1]{{\bf \titlecap{#1}.\xspace}}
\renewcommand{\boldsymbol}[1]{{\bm #1}}
\newcommand{\hide}[1]{}
\newcommand{\hiddenText}{{\color{gray} hidden text.}}
\newcommand{\hideWithText}[1]{\hiddenText}
\DeclareMathOperator*{\sgn}{sgn}
\newcommand{\tran}{^{\top}}
\newcommand{\Real}[1]{\mathbb{R}^{#1}}
\newcommand{\reals}{\Real{}}
\newcommand{\norm}[2][]{\left\| #2 \right\|_{#1}}
\newcommand{\vf}{\boldsymbol{f}}
\newcommand{\vg}{\boldsymbol{g}}
\newcommand{\vp}{\boldsymbol{p}}
\newcommand{\vu}{\boldsymbol{u}}
\newcommand{\vxx}{\boldsymbol{x}}
\newcommand{\vy}{\boldsymbol{y}}
\newcommand{\vw}{\boldsymbol{w}}
\newcommand{\vzz}{\boldsymbol{z}}
\newcommand{\blue}[1]{{\color{blue}#1}}
\newcommand{\revision}[1]{#1}
\newcommand{\linkToPdf}[1]{\href{#1}{\blue{(pdf)}}}
\newcommand{\linkToPpt}[1]{\href{#1}{\blue{(ppt)}}}
\newcommand{\linkToCode}[1]{\href{#1}{\blue{(code)}}}
\newcommand{\linkToWeb}[1]{\href{#1}{\blue{(web)}}}
\newcommand{\linkToVideo}[1]{\href{#1}{\blue{(video)}}}
\newcommand{\linkToMedia}[1]{\href{#1}{\blue{(media)}}}
\newcommand{\award}[1]{\xspace}
\newcommand{\isExtended}[2]{#1} % #2 for technical report
\newcommand{\remove}[1]{}
\newcommand{\pos}[1]{\ensuremath{\left[ #1\right]^{+}}\xspace}
\DeclareMathOperator*{\aggr}{\odot}
\newcommand{\numRobots}{\ensuremath{R}\xspace}
\newcommand{\robset}{\calR}
\newcommand{\x}[2][\@empty]{\vxx_{#2}%
	\ifx\@empty#1 \else (#1) \fi}
\newcommand{\xdot}[2][\@empty]{\dot{\vxx}_{#2}%
	\ifx\@empty#1 \else (#1) \fi}
\newcommand{\xhat}[2][\@empty]{\hat{\vxx}_{#2}%
	\ifx\@empty#1 {} \else {(#1)} \fi}
\newcommand{\xn}[2][\@empty]{\vw_{#2}%
	\ifx\@empty#1 {} \else {(#1)} \fi}
\newcommand{\safe}{\calS}
\newcommand{\unsafe}{\calS_u}
\newcommand{\neigh}[2][\@emtpy]{\mathcal{N}_{#2}%
	\ifx\@emtpy#1 \else (#1) \fi}
\newcommand{\neighrec}[2][\@emtpy]{\mathcal{N}_{#2}^\text{rec}%
	\ifx\@emtpy#1 \else (#1) \fi}
\newcommand{\xt}[2][]{\vxx_{#2,#1}}
\renewcommand{\u}[2][\@empty]{\vu_{#2}%
	\ifx\@empty#1 \else (#1) \fi}
\newcommand{\un}[2][\@empty]{\vu_{\neigh{#2}}^%
	\ifx\@empty#1 {} \else {#1} \fi}
\newcommand{\uset}{\calU}
\newcommand{\uref}[2][\@empty]{\bar{\vu}_{#2}%
	\ifx\@empty#1 {} \else {(#1)} \fi}
\newcommand{\upert}[2][\@empty]{\tilde{\vu}_{#2}%
	\ifx\@empty#1 {} \else {(#1)} \fi}
\newcommand{\uperth}[2][\@empty]{\tilde{\vu}_{#2}^\text{h}%
	\ifx\@empty#1 {} \else {(#1)} \fi}
\newcommand{\uhist}[3][\@empty]{u_{#2:#3}^%
	\ifx\@empty#1 {} \else {(#1)} \fi}
\newcommand{\ut}[2][]{\vu_{#2,#1}}
\newcommand{\delays}[2][]{\delta_{#1}(#2)}
\newcommand{\aoi}[3]{\Delta_#1^#2(#3)}
\newcommand{\aoimax}{\Delta_\text{max}}
\newcommand{\srate}{S_\text{rate}}
\newcommand{\data}[2]{\calI_{#1\leftarrow#2}}
\newcommand{\datarel}[2]{\calI_{#1\leftarrow#2}^#1}
\newcommand{\mess}[3]{I_{#1}^{#2}(#3)}
\newcommand{\hist}[2]{\calT_{#1\leftarrow#2}}
\newcommand{\mlp}{m}
\newcommand{\gnn}{\Gamma}
\newcommand{\feat}[3][\@empty]{f_{#2}^{#3}%
	\ifx\@empty#1 \else (#1) \fi}
\newcommand{\featedge}[2]{e_{#1,#2}}
\newcommand{\paramcbf}{\theta}
\newcommand{\cbfmodel}{h_\paramcbf}
\newcommand{\cbfval}[2][]{h_{#1}(#2)}
\newcommand{\cbfdot}[2][]{\dot{h}_{#1}(#2)}
\newcommand{\losscbf}[1][]{\calL_{\text{CBF},#1}}
\newcommand{\paramcontr}{\xi}
\newcommand{\controlmodel}{\pi_\paramcontr}
\newcommand{\controller}{\pi_\text{safe}}
\newcommand{\controllernom}{\pi_\text{nom}}
\newcommand{\control}[1]{\controlmodel(#1)}
\newcommand{\losscontr}[1][]{\calL_{\text{contr},#1}}
\newcommand{\step}{T_\text{s}}
\newcommand{\parampred}{\zeta}
\newcommand{\predmodel}{\lambda_\parampred}
\newcommand{\pred}[1]{\calP\left(#1\right)}
\newcommand{\predval}[2]{\widehat{\vw}_{#1}(#2)}
\newcommand{\predl}[3]{\Psi_{\text{L}}%
	\ifx\\#1\\ {} \else {\lr#1,#2,#3\rr} \fi}
\newcommand{\losspred}{\calL_\text{pred}}
\title{Safe Distributed Control of Multi-Robot Systems With Communication Delays}
\author{Luca~Ballotta\textsuperscript{\orcid{0000-0002-6521-7142}}
	and Rajat~Talak\textsuperscript{\orcid{0000-0002-6132-395X}},~\IEEEmembership{Member,~IEEE}
	\thanks{This work was supported in part by ARL DCIST CRA under Grant W911NF-17-2-0181
		and by the CARIPARO Foundation Visiting Programme ``HiPeR''.
	}
	\thanks{Luca Ballotta is with the Delft Center for Systems and Control (DCSC), Delft University of Technology, 2628 CD Delft, The Netherlands
		(e-mail: l.ballotta@tudelft.nl).}%
	\thanks{Rajat Talak is with the Laboratory for Information and Decision Systems (LIDS), Massachusetts Institute of Technology, Cambridge, MA 02139, USA
		(e-mail: talak@mit.edu).}
}
\begin{document}
	
% 	\bstctlcite{bib-options}
	\maketitle
	
	\begin{textblock}{20}(-2,0.1)
		\footnotesize
		\centering
		\setstretch{1}
		This article has been accepted for publication in the IEEE Transactions on Vehicular Technology.\\
		Please cite the article as: L. Ballotta and R. Talak,\\
		``Safe Distributed Control of Multi-Robot Systems With Communication Delays,''\\
		IEEE Transactions on Vehicular Technology, 2025.\\
	\end{textblock}
	
	%!TEX root = ../main.tex

\begin{abstract}
	Safe operation of multi-robot systems is critical, 
	especially in communication-degraded environments such as underwater for seabed mapping,	underground caves for navigation,
	and in extraterrestrial missions for assembly and construction. 
	We address safety of networked autonomous systems where the information exchanged between robots incurs communication delays.
	We formalize a notion of \emph{distributed control barrier function} for multi-robot systems,
	a safety certificate amenable to a distributed implementation,
	which provides formal ground to using graph neural networks to learn safe distributed controllers.
	Further,
	we observe that learning a distributed controller ignoring delays can severely degrade safety.
	We finally propose a predictor-based framework to train a safe distributed controller under communication delays,
	where the current state of nearby robots is predicted from received data and age-of-information.
	Numerical experiments on multi-robot collision avoidance show that our predictor-based approach can significantly improve the safety of a learned distributed controller under communication delays.
	\revision{A video abstract is available at \url{https://youtu.be/Hcu1Ri32Spk}.}
	
	\begin{IEEEkeywords}
		Communication delays,
		distributed control barrier function, 
		graph neural network,
		multi-robot system,
		safety.
	\end{IEEEkeywords}
	
\end{abstract}
	%!TEX root = ../main.tex

\section{Introduction}
\label{sec:intro}

\IEEEPARstart{M}{obile} autonomous robot networked systems are increasingly being conceived to aid humans in oceanbed mapping, 
underground subterranean navigation, 
search-and-rescue missions,
and space exploration~\cite{Hu21tvt-coordinatedControlMobileRobots,
	Li24tvt-flockingMultiRobot,
	Zhou19tvt-rlMultiRobot,
	Hu20tvt-voronoiMultiRobot,
	Lee13tm-teleoperation}.
Safe and coordinated operation of multi-robot systems is critical to their successful deployment.
However,
operation environments often induce severe communication outages that make information exchange between robots imperfect and delayed~\cite{Li24tvt-flockingMultiRobot,Pezzutto22lcss-remoteMPC,Capelli21icra-connectivityCBFDelays}.
Moreover,
the safety requirements of such networked autonomous system depend on of all robots,
and not just one (\eg robots need to avoid collisions with each other or move in a formation, without breaking connectivity).
Therefore, there is a need to address the question of safety in conjunction with imperfect information-exchange between robots in a networked autonomous system.

A recent trend has seen control barrier functions (CBFs) as a promising tool that ensures safety of control actions by design~\cite{Ames17tac-cbf}.
While centralized CBF-based control theoretically ensures safety,
its use is impractical for networked autonomous systems where decentralized (no communication) or distributed (inter-robot communication) control is preferred.
Previous work has addressed decentralized and distributed safe control with handcrafted CBFs~\cite{Wang17tro-safeMultirobotCBF},
which show promising results towards network scalability.
However,
computing a valid CBF for multi-robot and networked systems may be computationally hard,
and safety degradation of CBF-based control under communication delays has received little attention so far.

\begin{figure}
	\centering
	\includegraphics[width=.7\linewidth]{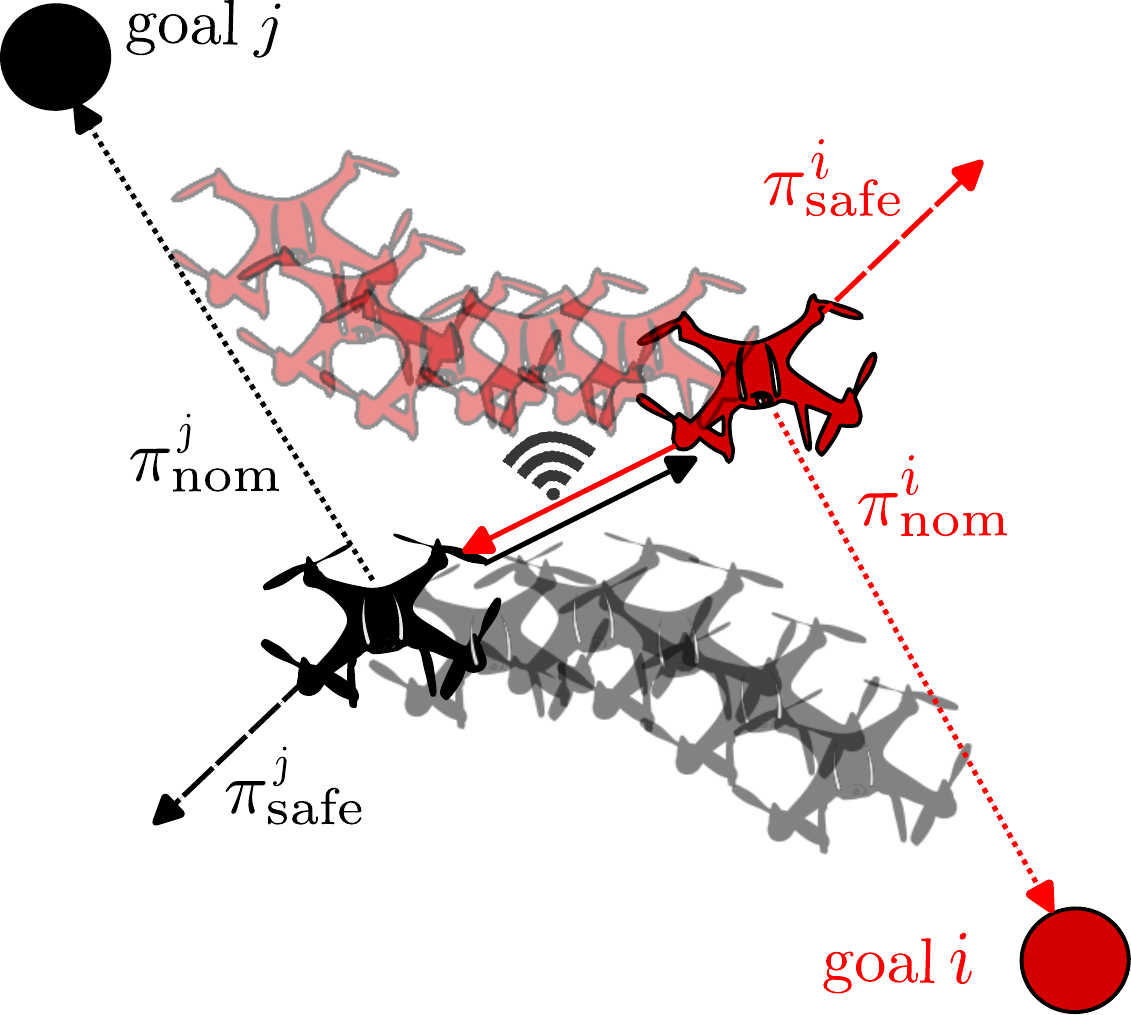}
	\caption{\revision{We propose a control strategy that keeps an autonomous multi-robot system safe via inter-robot communication.
		In this example,
		the controllers $\controller^i$ and $\controller^j$ of drones $i$ and $j$ use data sent by the other drone to avoid collisions while reaching the goals.}
	}
	\label{fig:cover-1}
\end{figure}

\begin{figure}
	\centering
	\includegraphics[width=.7\linewidth]{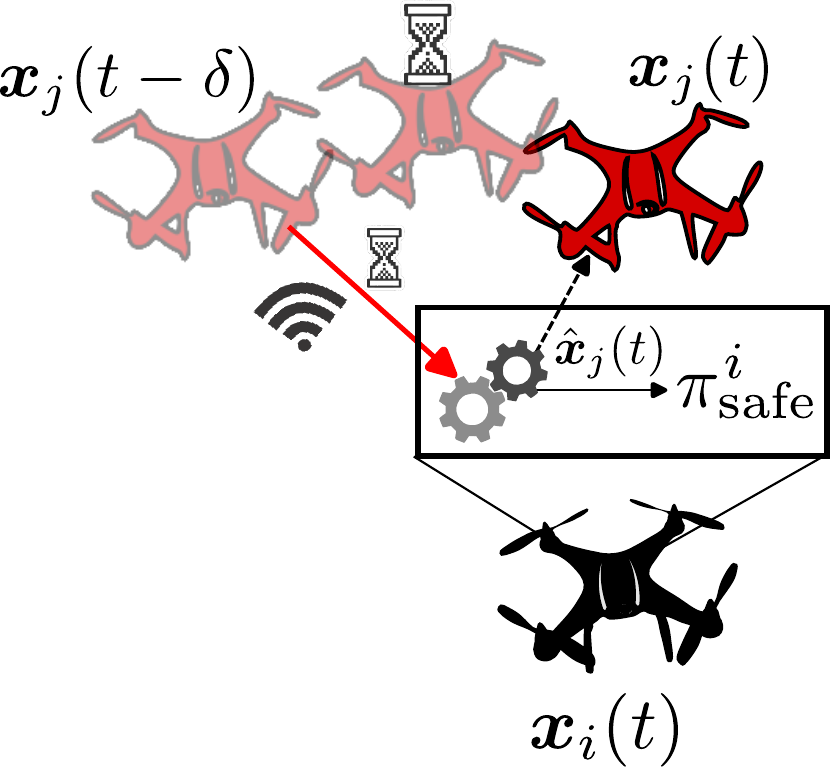}
	\caption{\revision{Safe control via wireless communication is crucially affected by transmission delays.
		We propose a predictor that uses delayed measurement $\x[t-\delta]{j}$ to compute the estimate $\xhat[t]{j}$ that is used by the controller $\controller^i$ of robot $i$ for safe real-time operation.}
	}
	\label{fig:cover-2}
\end{figure}
			%!TEX root = ../main.tex

\myParagraph{Contribution}
We first propose a formal framework for CBF-based safe distributed control.
We give a tailored definition of safe set for networked autonomous systems in \autoref{sec:setup} and formally characterize a \textit{distributed control barrier function} that guarantees safety under local communication in \autoref{sec:distr-cbf-perfect-info}.
This theoretically justifies a distributed mechanism for safety certification and a \textit{safe distributed controller} (\autoref{fig:cover-1}).
Our framework extends and complements previous work~\cite{Chen21lcss-cbfMultiAgent} that provides a pairwise decoupling condition on the CBF for a safe decentralized controller with backup control.
\isExtended{}{Moreover,
we highlight connections that enable our theoretical framework for robotic applications.}

Secondly,
we present a learning-based approach that builds on our formal results and leverages graph neural networks (GNNs) in \autoref{sec:distr-cbf-gnn}.
We consider a distributed controller where each robot computes its own control inputs based on data received from neighbors.
This allows for flexible controllers that can scale to larger robot teams and require less computation and communication than distributed optimization.

Our final contribution in \autoref{sec:distr-cbf-realistic-information} is a novel predictor-based safe distributed control framework under communication delays.
We consider delays that affect information exchange between robots,
differently from previous work on CBF that addresses actuation delays~\cite{Jankovic18acc-cbfInputDelay} or tackles this issue via heuristics~\cite{Capelli21icra-connectivityCBFDelays}.
\revision{Indeed,
	communication delays are more challenging because the receiving robot must estimate the states of other robots without knowing their past inputs,
	while input delays can be compensated through the robot's dynamics}.
We propose a prediction-based framework that is agnostic to the safety task to handle delays.
In it,
a predictor implemented at each robot predicts the current states of nearby robots given previous state information received from them and the associated Age-of-Information~\cite{Sun19book-AgeInformation},
which captures \emph{staleness} of received data (\autoref{fig:cover-2}).
We show that a simple alternating strategy, 
which alternatively learns a predictor model (given controller) and a controller model (given predictor), 
works. 

Our experiments in \autoref{sec:experiments} show that the GNN-based controller can safely navigate a team of mobile robots to avoid collisions under perfect communication,
and that the predictor-based controller significantly improves safety in the realistic case when transmissions incur communication delays.

\revision{To the best of our knowledge,
	this is the first work that formalizes a general distributed control barrier function suited to safe control based on inter-agent communication.
	Also,
	it is the first that proposes a general CBF-based control architecture to tackle communication delays between agents.}
		%!TEX root=../main.tex

\subsection{Related Work}

\myParagraph{Certified Control}
Recent years have seen a surge in attention towards \emph{control certificates},
especially control barrier functions (CBF) to meet hard constraints in the state space,
often interpreted as \emph{safety}.
Control Barrier Functions were introduced in~\cite{Ames17tac-cbf} inspired by barrier functions in optimization.
Follow-up work extended them to uncertain dynamic environments~\cite{Molnar23tcst-cbfInputDelay,Hamdipoor23ejc-environmentallyRobustCBF},
online adaptation~\cite{Breeden22cdc-proactiveCBF},
and observer-controller co-design~\cite{Agrawal23lcss-observerControllerCBF},
to make a few examples.
Decentralized collision avoidance via CBF was studied in~\cite{Wang17tro-safeMultirobotCBF,Chen21lcss-cbfMultiAgent} \revision{without communication between robots}.
Previously,
a similar effort was made in~\cite{Panagou16tac-multiRobotLyapunovBarrierFunction} for leader-follower distributed control where ad-hoc barrier functions were proposed for several control objectives.
Other relevant works are~\cite{Cavorsi22rss-resilienceCBF},
where a resilient algorithm was proposed to tame adversarial agents,
and~\cite{Ong23automatica-nonsmoothCBFConnectivityMaintenance},
that addresses connectivity maintenance based on Laplacian eigenvalues.

\myParagraph{Learning-based certificates}
Besides optimization-based methods such as SoS,
a recent trend to find a CBF is to use a machine learning model such as a neural network (NN).
Reference~\cite{Xiao23tro-BarrierNet} proposed BarrierNet,
a learning-based tool that tunes parameters to improve performance while guaranteeing safety.
Paper~\cite{Gaby22cdc-Lyapunov-Net} uses an ad-hoc design to learn Lyapunov functions with guaranteed positive semi-definiteness.
Works~\cite{Abate21lcss-formalSynthesisLyapunovNN,Abate21ichs-FOSSIL} introduced FOSSIL,
which validates the learned CBF via Satisfiability Modulo Theory (SMT).
While this approach is general,
the formal verification is computationally intractable for large model or state dimension.
Works~\cite{Zakwan23lcss-hamiltonianNN,Furieri22cdc-neuralSystemLevelSynthesis} studied learnable controllers with formal guarantees implicitly provided by the considered class of closed-loop systems.
The authors in~\cite{Mathiesen23lcss-neuralBarrierStochastic} proposed a learning-based approach with formal verification based on linear approximations of neural networks for stochastic closed-loop dynamics.

\myParagraph{Control with Delays}
In distributed control,
wireless communication affects feedback information in terms of noise and delays.
These issues were tackled in~\cite{Sinopoli04tac-KalmanIntermittent,
	Munz10automatica-consensusDelays,
	Matni17tcns-h2controlAtomicNorm,
	Gomez19tac-h2controlDelays,
	Branz22tcst-Drive-by-Wi-Fi},
to name a few.
Delays in optimal control design are challenging because even fairly realistic models induce nonconvex or intractable optimization problems.
The communication community has proposed Age-of-Information~\cite{Sun19book-AgeInformation},
with a large body of works addressing information freshness~\cite{Tripathi23tmc-AgeOptimal,Talak20tn-aoiInterference}
or control-oriented metrics~\cite{Tripathi19allerton-whittle,
	Ayan19-AoIvsVoI,
	Klugel19infocom-aoiNetworkedControl,
	Champati19infocom-aoiNetworkSingleLoop}.
In these works,
the design focuses on scheduling of updates.
Also,
CBF-related research has addressed delays.
Input delays were counterbalanced in~\cite{Jankovic18acc-cbfInputDelay,Molnar23tcst-cbfInputDelay} with a predictor.
Delays affecting input, dynamics, and CBF were addressed in~\cite{Kiss23ijrnl-cbfDelays},
with Control Barrier Functionals for retarded systems.
However,
knowledge of the state at the current time is assumed,
whereas \revision{realistic} communication delays cause each robot to know only past states of other robots.
A heuristic CBF-based control was proposed in~\cite{Capelli21icra-connectivityCBFDelays} for connectivity maintenance and collision avoidance \revision{for single integrator dynamics and all-to-all communication}.

\myParagraph{Graph Neural Networks for Control}
GNNs are general neural network models that inherently learn distributed architectures~\cite{Kipf16iclr-graphCOnvolutonalNetworks,
	Velickovic18iclr-attentionNetwork,
	Brody21iclr-howAttentive}.
Early works~\cite{Khan20corl-graphPolicyGradient,Li20iros-gnnDecentralizedControl} focused on applicability of Graph Convolutional Networks to distributed control.
Follow-up research~\cite{Kortvelesy21icra-ModGNN} proposed more general models,
paper~\cite{Gama22tsp-gnnCOntroller} further broadened the framework including imitation and transfer learning,
and~\cite{Sebastian23icra-LEMURS} presented a physics-informed distributed controller for port-Hamiltonian systems.
However,
the works above do not address safety based on CBFs and neglect delays in information exchange.
		\revision{%!TEX root=../main.tex

\subsection{Notation}
\label{sec:notation}

Normal font, boldface (\eg $\vxx$),
and calligraphic letters (\eg $\safe$)  denote scalars, vectors, and sets,
respectively.
The symbol $\ominus$ denotes difference between two elements in an Euclidean or non-Euclidean space.
The symbol $\odot$ denotes a permutation-invariant operation over the elements of a finite set.}
	
	%!TEX root=../main.tex

\section{Background}
\label{sec:background}

This section reviews the main tools used in our framework.
		%!TEX root=../main.tex

\subsection{Control Barrier Functions}
\label{sec:cbf-safety}

The state of a controlled dynamical system evolves as 
\begin{equation}
	\label{eq:dynamics-general}
	\dot{\vxx}(t) = \vf(\x[t]{}, \vu(t)),
\end{equation}
where $\x[t]{}\in\calX$ denotes the system state at time $t$,
$\vu(t)\in\uset$ is the control action at time $t$,
and $\vf$ is a locally Lipschitz function.
Being us concerned with control design,
the control input $\u[t]{}$ is computed by a controller $\pi$.
In the following,
when there is no risk of ambiguity,
we occasionally omit the dependence on time $t$.

The system is \emph{safe} whenever the state lies in the \emph{safe set} $\safe\subset\calX$,
\ie $\vxx \in \safe$. 
The safe set is task dependent and describes safety in the state space.
For example,
if safety means avoiding collisions,
the set $\safe$ contains all collision-free configurations.
Under this setting,
the goal is to develop a certification mechanism that can attest whether the controlled dynamical system,
starting from the safe set,
remains in it forever.
If this happens when the state $\vxx$ obeys dynamics~\eqref{eq:dynamics-general},
we say that $\safe$ is \emph{forward invariant} w.r.t.~\eqref{eq:dynamics-general}.
The set $\safe$ is typically assumed compact.
The unsafe set is $\unsafe \doteq \calX \setminus \safe$. 

Control barrier functions (CBFs) provide an elegant way to certify whether the controlled dynamical system will remain in the safe state forever,
or not.
In this section,
we briefly recap the main results on CBF-based safety guarantees and safe control.
The interested reader is referred to~\cite{Ames17tac-cbf} for details.

Assume that the safe set can be described as the superlevel set of a continuous function $h$, 
as follows:
\begin{equation}\label{eq:safe-set-cbf}
	\safe = \{\vxx \in\calX : h(\vxx)\ge0\}.
\end{equation}
\remove{\begin{itemize}
	\item $\safe = \{\vxx \in\calX : h(\vxx)\ge0\}$\remove{$\mathrm{int}(\safe) = \{\vxx \in\calX : h(\vxx)>0\}$};
	\item $\unsafe = \{\vxx \in\calX : h(\vxx)<0\}$\remove{$\partial\safe = \{\vxx \in\calX : h(\vxx)=0\}$}.
\end{itemize}}

\isExtended{}{
	\begin{rem}
		These definitions of safe and unsafe sets ensure that the boundary of the safe set $\partial\safe$ is the $0$-level set of $h$.
	\end{rem}
}

\begin{definition}[Control Barrier Function~\cite{Ames17tac-cbf}]
	\label{def:cbf}
	Given a set $\safe$ defined above for a continuously differentiable function $h:\calX \rightarrow \reals$,
	the function $h$ is a \emph{Control Barrier Function} (CBF) for system~\eqref{eq:dynamics-general} on $\calM\subset\calX$ with $\safe\subseteq\calM$ if there exists an extended class $\calK$-infinity function such that
	\begin{equation}\label{eq:cbf-condition}
		\sup_{\vu\in\calU} \; \cbfdot{\vxx,\u{}} \ge -\alpha(\cbfval{\vxx}) \quad \forall \vxx\in\calM.
	\end{equation}
	In this case,
	the \emph{safe control set} associated with state $\x{}$ is
	\begin{equation}
		\label{eq:thm:cbf}
		\calU_\safe(\vxx) \doteq \left\{ \vu \in \calU : \cbfdot{\vxx,\vu} \ge -\alpha(h(\vxx)) \right\}.
	\end{equation}
\end{definition}

\begin{thm}[Safe control~\cite{Ames17tac-cbf}]
	\label{thm:cbf}
	If $h$ is a CBF for~\eqref{eq:dynamics-general},
	any Lipschitz continuous controller $\pi:\calM\to\calU$ s.t. $\pi(\vxx)\in\calU_\safe(\vxx)$ makes the safe state $\safe$ forward invariant w.r.t.~\eqref{eq:dynamics-general}.
\end{thm}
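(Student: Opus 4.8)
The plan is to close the loop and reduce the claim to a one-dimensional differential inequality on $h$ evaluated along trajectories, and then settle it with a comparison argument. First I would substitute the controller into the dynamics to obtain the closed-loop system $\dot{\vxx}(t) = \vf(\vxx(t),\pi(\vxx(t)))$. Since $\vf$ is locally Lipschitz and $\pi$ is Lipschitz continuous by hypothesis, the closed-loop vector field is locally Lipschitz, so for any initial condition $\vxx(0)\in\safe$ the Picard--Lindel\"of theorem yields a unique $C^1$ maximal solution $\vxx(\cdot)$. I only need to track this solution while it remains in $\calM$, where $\pi$ and the CBF condition are defined; the argument below will show this holds for all forward time.

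Next I would introduce the scalar signal $\eta(t) \doteq h(\vxx(t))$. Because $h$ is continuously differentiable and $\vxx(\cdot)$ is $C^1$, the chain rule gives $\dot{\eta}(t) = \cbfdot{\vxx(t),\pi(\vxx(t))}$. The controller is chosen so that $\pi(\vxx)\in\calU_\safe(\vxx)$ for every $\vxx$, and by the definition of the safe control set in~\eqref{eq:thm:cbf} this means precisely $\cbfdot{\vxx(t),\pi(\vxx(t))} \ge -\alpha(h(\vxx(t)))$. Substituting, $\eta$ satisfies the differential inequality $\dot{\eta}(t) \ge -\alpha(\eta(t))$ wherever $\vxx(t)\in\calM$, with initial value $\eta(0)=h(\vxx(0))\ge 0$ since $\vxx(0)\in\safe$.

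The core step is the comparison. I would consider the scalar ODE $\dot{y}(t) = -\alpha(y(t))$ with $y(0)=\eta(0)\ge0$. Since $\alpha$ is an extended class-$\calK_\infty$ function we have $\alpha(0)=0$, so $y\equiv 0$ is an equilibrium; moreover $\alpha$ is strictly increasing, so $\dot y<0$ whenever $y>0$ and $\dot y>0$ whenever $y<0$, making the origin a barrier that solutions cannot cross. Hence $y(0)\ge 0$ forces $y(t)\ge 0$ for all $t\ge0$. The comparison lemma, applied to $\dot\eta \ge -\alpha(\eta)$ against $\dot y = -\alpha(y)$ with equal initial data, then gives $\eta(t)\ge y(t)\ge 0$, i.e. $h(\vxx(t))\ge0$ and thus $\vxx(t)\in\safe$, throughout the interval of existence. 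Compactness of $\safe$ keeps the trajectory bounded, ruling out finite-time escape, so the maximal interval is $[0,\infty)$; this simultaneously confirms $\vxx(t)\in\safe\subseteq\calM$ for all time, which closes the argument and establishes forward invariance of $\safe$.

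The main obstacle I anticipate is the comparison step itself, since $\alpha$ is only an extended class-$\calK_\infty$ function and need not be Lipschitz, so uniqueness for $\dot y = -\alpha(y)$ is not automatic. I would circumvent this by observing that only the nonnegativity $y(t)\ge0$ is required, and that this follows from the sign structure of $\alpha$ and the invariance of the equilibrium $y=0$ rather than from uniqueness; alternatively, one may restrict $\alpha$ to the compact range $\eta([0,T])$ and bound it below by a locally Lipschitz class-$\calK$ function to invoke the comparison lemma rigorously. The only remaining care is ensuring the solution does not exit $\calM$ before the conclusion applies, which the compactness of $\safe$ and the derived invariance resolve jointly.
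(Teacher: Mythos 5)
Your argument is correct: the reduction to the scalar differential inequality $\dot\eta \ge -\alpha(\eta)$ along closed-loop trajectories, followed by comparison with $\dot y = -\alpha(y)$ and the sign/invariance argument at $y=0$, is precisely the standard proof of this result, and you rightly flag the only delicate point (non-Lipschitz $\alpha$) and resolve it. Note that the paper does not prove \cref{thm:cbf} at all --- it is imported as a background result with a citation to~\cite{Ames17tac-cbf} --- so there is no in-paper proof to diverge from; your write-up matches the argument given in that reference.
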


\begin{rem}[Time derivative of CBF]
	The time derivative of the CBF $\cbfval{\vxx}$,
	$\cbfdot{\vxx,\vu}$,
	is given by
	\begin{equation}
		\cbfdot{\vxx,\vu} = L_\vf h(\vxx,\vu) = \nabla h(\vxx)\tran\vf(\vxx, \vu)
	\end{equation}
	where $L_\vf h$ denotes the Lie derivative of $h$ with respect to $\vf$.
\end{rem}

\begin{rem}[CBF with distributed communication]
	Crucially,
	while the CBF $\cbfval{\vxx}$ depends on the state $\vxx$,
	its time derivative $\cbfdot{\vxx,\vu}$ depends on both state $\vxx$ and control action $\vu$ through the state dynamics~\eqref{eq:dynamics-general}.
	In a distributed communication context,
	if a unique CBF is used for the whole multi-agent system,
	this means that all agents (robots) should instantaneously know the states of all other agents to compute $\cbfval{\vxx}$,
	and both states and control actions of all other agents to compute $\cbfdot{\vxx,\vu}$.
	In practice,
	this may be an infeasible requirement.
	In \autoref{sec:distributed-cbf},
	we build a learning-based \emph{distributed CBF} framework that allows robots to \emph{locally} coordinate in order to achieve safety.
\end{rem}
			%!TEX root=../main.tex

\myParagraph{Implementing a Safe Controller}
Given %a desired set $\safe$ and 
a CBF $h$, %for the controlled dynamics~\eqref{eq:dynamics},
\cref{thm:cbf} offers a way to compute the control input $\u{}$ in~\eqref{eq:dynamics-general} so as to keep the state inside the set $\safe$.
A common situation is where a nominal controller $\controllernom$ is already available and designed to optimize a task of interest,
such as trajectory tracking or goal reaching,
without guaranteeing safety.
We denote nominal control inputs computed by $\controllernom$ as $\uref{}$.
In this case,
one can use the CBF as a filter to design a new control policy that ensures safety while being minimally deviant from the nominal controller. 
This can be achieved by solving the following optimization problem at each time $t$:
\begin{mini!}
	{\vu\in\calU}
	{\|\vu-\uref[t]{}\|^{2}\protect\label{eq:control-optimization-based-objective}}
	{\label{eq:control-optimization-based}}
	{\u[t]{} = }
	\addConstraint{\cbfdot{\x[t]{},\vu}}{\ge-\alpha(\cbfval{\x[t]{}}).\protect\label{eq:control-optimization-based-CBS}}
\end{mini!}
If a nominal controller is not available,
one can still use~\eqref{eq:control-optimization-based} with $\uref[t]{}\equiv0$ to minimize control effort,
an approach usually called \emph{point-wise minimum-norm control}.

\begin{rem}[Convexity]
	The optimization problem~\eqref{eq:control-optimization-based} becomes much simpler when dynamics~\eqref{eq:dynamics-general} are control-affine, 
	\ie
	\begin{equation}
		\xdot[t]{} = \vf(\x[t]{}) + \vg(\x[t]{})\u[t]{},
	\end{equation}	
	for some locally Lipschitz $\vf$ and $\vg$,
	and the set $\uset$ is a polytope.
	In this case,
	problem~\eqref{eq:control-optimization-based} is a QP and can be efficiently solved. % with many efficient off-the-shelf solvers.
	Moreover,
	problem~\eqref{eq:control-optimization-based} remains convex if $\uset$ is convex.
\end{rem}

%Given a control barrier function $h(\cdot)$, one can also compute the control action $u(t) \in \calU(t)$ so that the system~\eqref{eq:dynamics} always remains in the safe state. We can do this by solving~\eqref{eq:cbf-opt}.
%%
%Furthermore, given a nominal controller $\pi_n:\calX \rightarrow \calU$, which may not guarantee safety, one can use the control barrier function $h(\cdot)$ to design a new control policy that ensures safety (\ie $\x[t]{} \in \calS$), while being minimally deviant from the nominal control action. We can achieve this by solving:
%%
%\begin{equation}
%	\label{eq:cbf-opt-md}
%	\begin{aligned}
%		& \underset{\vu \in \calU}{\text{Minimize}}
%		& &\hspace{-1mm} \norm{\vu - \pi_n(\vxx)}^2, \\
%		& \text{subject to}
%		& & (L_\vf h_\theta)(\vxx, \vu) + \alpha\left( h(\vxx)\right) \geq 0.
%	\end{aligned}
%\end{equation}
		%!TEX root=../main.tex

\subsection{Learning Control Barrier Functions}
\label{sec:cbf-learning}

Because the state-of-the-art techniques to find a CBF $h$ that meets~\eqref{eq:safe-set-cbf}--\eqref{eq:cbf-condition} rely on Sum-of-Squares programming~\cite{Ames17tac-cbf},
the computational complexity becomes intractable for large state dimension.
Recent works have shown that CBFs can be computed via machine learning~\cite{Abate21ichs-FOSSIL,
	Furieri22cdc-neuralSystemLevelSynthesis,
	Mathiesen23lcss-neuralBarrierStochastic}.
%A recent trend is to learn a CBF by means of machine learning.
%
In this case,
$h$ is parametrized by a trainable model $\cbfmodel$ (\eg a neural network) that is trained to meet the conditions that characterize a CBF. % on the domain of interest.
This can be done by minimizing the loss $\calL(\paramcbf)$ defined as
\begin{multline}\label{eq:loss-cbf-background}
	\calL(\paramcbf) \doteq 
	\sum_{\vxx \sim \calS} \pos{\epsilon - \cbfval[\paramcbf]{\x{}}} 
	+ \sum_{\vxx \sim \calS_u} \pos{\epsilon + \cbfval[\paramcbf]{\x{}}} \\
	+ \sum_{\vxx \sim \calX} \pos{\epsilon - \dot{\cbfmodel}(\x{}, \u{}) - \alpha(\cbfval[\paramcbf]{\x{}})},
\end{multline}
where $\pos{\cdot}\doteq\max\{\cdot,0\}$ and the summations are computed over the states sampled during training, 
%which runs the dynamics~\eqref{eq:dynamics-general},
while control actions are selected from the set $\calU_\safe(\vxx)$, % in~\eqref{eq:thm:cbf},
\eg via~\eqref{eq:control-optimization-based}.
The slack hyperparameter $\epsilon>0$ softly enforces the CBF conditions on states ``close'' to training samples by compactness of $\safe$.
%A popular way of constructing such control actions is by solving the following optimization problem (\emph{point-wise minimum-norm}):
%%
%\begin{mini!}
%	{\vu\in\calU}
%	{\norm{\vu}^{2}\protect\label{eq:cbf-opt-objective}}
%	{\label{eq:cbf-opt}}
%	{\vu(t) =}
%	\addConstraint{(L_\vf h_\paramcbf)(\vxx, \vu) }{\ge-\alpha\left( h(\vxx)\right).\protect\label{eq:cbf-opt-constr}}
%\end{mini!}
%\begin{equation}
%	\label{eq:cbf-opt}
%	\begin{aligned}
%		& \underset{\vu \in \calU}{\text{Minimize}}
%		& &\hspace{-1mm} \norm{\vu}^2, \\
%		& \text{subject to}
%		& & (L_\vf h_\paramcbf)(\vxx, \vu) + \alpha\left( h(\vxx)\right) \geq 0.
%	\end{aligned}
%\end{equation}
%which finds the control action that satisfies the third constraint (\ie (iii) in Theorem~\ref{thm:cbf}) with minimum norm.

Recent works have also proposed to learn a safety certificate and a safe controller simultaneously. 
%Extending the approach described in Section~\ref{sec:cbf-learning}, one can also model t
The controller is parametrized as a second learning-based model, 
say $\controlmodel$,
and the two models are jointly trained with the loss $\calL(\paramcbf, \paramcontr)$ defined as in~\eqref{eq:loss-cbf-background} but with the control input $\u{}$ replaced by the learned control $\control{\x{}}$.
%
%\begin{multline}
%	\label{eq:temp1}
%	\calL(\paramcbf, \paramcontr) = 
%	\sum_{\vxx \sim \calS} \pos{\epsilon - \cbfval[\paramcbf]{\x{}}} 
%	+ \sum_{\vxx \sim \calS_u} \pos{\epsilon + \cbfval[\paramcbf]{\x{}}} \\
%	+ \sum_{\vxx \sim \calX} \pos{\epsilon - L_\vf \cbfmodel(\x{}, \control{\x{}}) - \alpha(\cbfval[\paramcbf]{\x{}})},
%\end{multline}
%%
To stay ``close'' to a nominal controller $\controllernom$,
one can add the deviation term $\norm{\control{\x{}} - \uref{}}$ to the loss $\calL(\paramcbf, \paramcontr)$.
		%!TEX root = ../main.tex

\subsection{Graph Neural Networks}
\label{sec:back-gnn}

Graph neural networks are machine learning models for graph-based data.
Each node in the graph carries one copy of a common GNN model,
which takes in input data associated with that node and with its neighbors (\emph{node features}) plus data shared from neighbors to the node (\emph{edge features}).

The most general implementations use so-called neighborhood aggregation or message passing.
Given initial node features $\feat{i}{(0)}$ of node $i$ and edge features $\featedge{i}{j}$ from in-neighbor node $j$ to node $i$,
the features of node $i$ in each {layer} $k=0,\dots,K-1$ are iteratively updated as
\begin{equation}\label{eq:message-passing}
	\feat{i}{(k+1)} = \gamma^{(k)} \lr \feat{i}{(k)}, \aggr_{j \in \neigh{i}} \phi^{(k)}\lr\feat{i}{(k)},\feat{j}{(k)},\featedge{j}{i}\rr \rr,
\end{equation}
where $\phi$ and $\gamma$ are differentiable functions (usually parameterized as MLPs),
$\aggr$ is a differentiable,
permutation-invariant aggregation function (\eg sum or max),
and $\neigh{i}$ collects in-neighbors of node $i$ in the message-passing graph.
The output of the GNN for node $i$ is given by features $\feat{i}{(K)}$ computed in the last layer.
	
	%!TEX root=../main.tex

\section{Setup}\label{sec:setup}

In this section,
we first describe the system model including robot dynamics,
information exchange between robots,
and safety of the networked multi-robot system in \autoref{sec:system-model}.
Then,
we state the problem about safe distributed control design in \autoref{sec:problem-statement},
which is addressed in the rest of the article.
		%!TEX root=../main.tex

\subsection{System Model}
\label{sec:system-model}

\myParagraph{Robot Dynamics}
We consider a multi-robot system where $R$ robots fulfill a control task.
We label each robot with index $i\in\robset\doteq\{1,\dots,R\}$.
The dynamics of robot $i$ are given by
\begin{equation}
	\label{eq:dynamics}
	\xdot[t]{i} = \vf(\x[t]{i}, \u[t]{i}),
\end{equation}
where $\x[t]{i} \in \calX$ denote the state of the robot and $\u[t]{i}\in\uset$ the control action, at time $t$. 
The state of the robot may include a physical description such as its position and velocity,
among other things. 
We use $\vp_i\in\Real{n}$ to denote the position of robot $i$,
with $n\in\{2,3\}$.
We assume that all robots have the same dynamics $\vf$ and that the state and action sets $\calX$ and $\uset$ are identical for all robots.
Vectors $\x{}\in\calX^R$ and $\u{}\in\calU^R$ stack respectively states and control actions of all robots.

\myParagraph{Information Exchange}
Robots can communicate through a wireless channel.
Within a \emph{distributed control} setting,
local information exchange between robots is used to achieve coordination and cooperation by computing control inputs that suitably take into account the configurations and actions of other robots.
Let $R_c$ be the communication radius. 
Two robots within communication range (\ie $\norm{\vp_i - \vp_j} \le R_c$) can exchange information,
\eg their states.
%\revision{Importantly,
%while we assume disk-based communication as commonly done~\cite{Capelli21icra-connectivityCBFDelays,Panagou16tac-multiRobotLyapunovBarrierFunction},
%our control design in the next sections is agnostic to this model,
%especially the predictor-based design in \autoref{sec:distr-cbf-realistic-information}.}
We use the symbol $\mess{j}{}{t}$ to denote the \emph{message} transmitted by robot $j$ to robots within range at time $t$, 
and describe it in detail later.
Under \emph{perfect-information exchange}, 
a robot instantaneously receives information from all the other robots within its communication radius 
(\eg if $\norm{\vp_i(t) - \vp_j(t)} \leq  R_c$,
then robot $i$ knows $\vxx_j(t)$ at time $t$).
However, a \emph{realistic information-exchange} means that robots can have only delayed information of other robots within communication range,
due to interference and wireless channel fading.
We denote all time-epochs when the information was received by $i$ from $j$ till time $t$ as
\begin{equation}\label{eq:hist}
	\hist{i}{j}(t) \doteq \lb t' \le t : \mbox{robot } i \mbox{ receives } \mess{j}{}{t'} \rb
\end{equation}
and the information that has been received by robot $i$ from robot $j$ till time $t$ by
\begin{equation}
	\label{eq:info-ij-all}
	\data{i}{j}(t) \doteq \left\{ \mess{j}{}{t'} : t' \in \hist{i}{j}(t) \right\}.
\end{equation}
Under perfect-information exchange, 
the set $\hist{i}{j}(t)$ in~\eqref{eq:hist} corresponds to all times when the robots $i$ and $j$ were within the communication radius, \ie 
\begin{equation}
	\hist{i}{j}(t) = \lb t' \leq t : \norm{\vp_i(t') - \vp_j(t')} \leq R_c \rb.
\end{equation}
However,
under realistic information exchange,
not all information sent by robot $j$ may be received by robot $i$ and $\calI_{i \leftarrow j}(t)$ contains only information that has been successfully received.
In particular,
if communications experience nonnegligible delays,
the set $\calI_{i \leftarrow j}(t)$ will \emph{not} contain the most recent message $ I_{j}(t)$ and possibly other messages transmitted before time $t$.
This crucially affects control strategies based on distributed information exchange,
which must take delays into account to counterbalance them.
\revision{Note that,
	while we use the standard disk-based communication model~\cite{Capelli21icra-connectivityCBFDelays,Panagou16tac-multiRobotLyapunovBarrierFunction},
	we do not assume any model on communication delays and only assume that the available information at each robot is given by~\eqref{eq:hist}--\eqref{eq:info-ij-all}.}

\myParagraph{Safety of Networked Autonomous System}
We assume that each robot has a built-in, high-level autonomy that drives its plans and actions. 
This autonomy can be abstracted out as essentially implementing a nominal controller $\controllernom$ that computes action $\uref[t]{i}$ at time $t$ to fulfill a task of interest (\eg trajectory tracking).
However, this nominal control is not guaranteed to be \emph{safe} with respect to the network of robots 
(for example, it may cause two mobile robots to collide with each other).
We handle inter-robot interactions via a feedback controller $\controller$ that is in charge of safety and computes corrective control actions $\upert[t]{i}$ using feedback information received from other robots.
A general characterization is
\begin{equation}\label{eq:control-law}
	\upert[t]{i} = \controller\lr\xt[{[0,t]}]{i},\ut[[0,t)]{i},\lb\data{i}{j}(t)\rb_{j\in\robset\setminus\{i\}}\rr.
\end{equation}
Notation $\xt[{[0,t]}]{i}$ (resp., $\ut[[0,t)]{i}$) refers to all states (resp., control inputs) of robot $i$ till time $t$.
The safety-aware control input applied in~\eqref{eq:dynamics} is $\u[t]{i} = \uref[t]{i} + \upert[t]{i}$.
The feedback controller $\controller$ can use previous control actions $\ut[[0,t)]{i}$ \emph{excluding} $\u[t]{i}$ at time $t$,
because this is precisely computed from~\eqref{eq:control-law}.

We next propose a notion of safety for the networked autonomous system of robots.
The system
is \emph{safe} if the state $\x{}$ lies in the \emph{safe set} $\calS \subset \calX^\numRobots$. 
Importantly,
this involves \emph{all} robots in the system.
We make two assumptions on the structure of this safety set $\calS$ for a networked autonomous system,
which are instrumental for the theoretical result developed in~\autoref{sec:distributed-cbf}.
\begin{ass}[Safe set] 
	\label{ass:safe-set}
	There exists function $\psi$
	such that
	\begin{equation}\label{eq:safe-set}
		\calS = \left\{ \vxx \in\calX^\numRobots : \aggr_{j\in\robset}\psi(\vxx_i\ominus\vxx_j) \geq 0 \;\forall i\in\robset\right\},
	\end{equation}
	where $\aggr_{i\in\calV}$ is a permutation-invariant operation over set $\calV$
	and $\vy\ominus\vzz\in\calX$ is the difference between $\vy,\vzz\in\calX$.\footnote{
		In general,
		$\calX$ need not be Euclidean (\eg it can be a manifold).
		If $\calX$ is Euclidean,
		then $\ominus$ reduces to the standard difference between two vectors.
	}
\end{ass} 
\begin{ass}[Safety threshold]
	\label{ass:dist}
	There exists a continuous function $\rho:\Real{n} \rightarrow \reals$ such that
	$\rho(\vp_i - \vp_k) \ge 0$ implies
	$\sgn\aggr_{j\in\robset}\psi(\vxx_i\ominus\vxx_j) = \sgn\aggr_{j\in\robset\setminus\{k\}}\psi(\vxx_i\ominus\vxx_j) \ \forall k\neq i$.
\end{ass}
\cref{ass:safe-set} means that safety can be decomposed across robots.
That is,
the system is safe if and only if all robots are individually safe.
Moreover,
for each robot,
safety does not depend on other robots' \emph{identities} but only on their \emph{states} $\x{j}$'s.

\cref{ass:dist} states that safety of any robot is independent from those robots $k$ for which the condition $\rho(\vp_i - \vp_k) \ge 0$ holds.
This threshold condition has an impact on necessary and sufficient communication requirements.
In fact,
if the communication range between robots allows this condition to hold,
then safe control can be implemented based on local information exchange,
as will be clarified later in \autoref{sec:distr-cbf-perfect-info}.

The notion of safe set given by \cref{ass:safe-set,ass:dist} is general enough to encompass various safety applications,
\revision{as exemplified next.}
\begin{ex}[Collision avoidance]\label{example:collision-avoidance}
	\revision{We now show how model~\eqref{eq:safe-set} can be used to describe collision avoidance in a team of mobile robots $\robset$.}
	Let $d_\text{coll}$ denote the minimal distance two robots need to maintain to avoid collision. 
	Then, the safe set for collision avoidance is given by 
	\begin{equation}
		\calS = \left\{ \vxx \in \calX^\numRobots : \min_{j \in \robset\setminus\{i\}}\norm{\vp_i - \vp_j} \geq d_\text{coll} \; \forall i\in\robset\right\}.
	\end{equation}
	This is an instantiation of~\eqref{eq:safe-set} in Assumption~\ref{ass:safe-set} with $\psi(\x{i}\ominus\x{j}) \doteq \norm{\vp_i-\vp_j} - d_\text{coll}$ and the permutation-invariant operation $\min_{j \in \robset\setminus\{i\}}$.
	Assumption~\ref{ass:dist} is satisfied with $\rho(\vp_i - \vp_j) \doteq \norm{\vp_i - \vp_j} - d_\text{coll}$, 
	meaning that,
	if any two robots are enough apart, 
	they do not collide into each other.
\end{ex}
		%!TEX root=../main.tex

\subsection{Problem Statement}
\label{sec:problem-statement}

We aim to design a safety certification mechanism that attests whether the networked autonomous system is safe and design a minimally invasive corrective controller $\controller$, 
assuming a nominal controller $\controllernom$, 
that ensures that the state $\vxx$ remains safe. %, forever. 

\begin{prob}[Distributed safety certification]\label{prob:distributed-safety-certificate}
	Find a distributed safety certification mechanism that can attest the safety of the networked autonomous system by using the local information at each robot $i$,
	\ie states $\xt[{[0,t]}]{i}$, 
	control actions $\ut[[0,t)]{i}$, 
	and communicated information $\{\data{i}{j}(t)\}_{j\in\robset\setminus\{i\}}$ at time $t$.
\end{prob}

\begin{prob}[Distributed safe controller]\label{prob:distributed-controller}
	Find a distributed minimally invasive controller $\controller$ that can ensure safety of the networked autonomous system when using the nominal controller $\controllernom$ by using the local information at each robot $i$,
	\ie states $\xt[{[0,t]}]{i}$, 
	control actions $\ut[[0,t)]{i}$, 
	and communicated information $\{\data{i}{j}(t)\}_{j\in\robset\setminus\{i\}}$ at time $t$.
\end{prob}

\begin{rem}
	While works in~\autoref{sec:cbf-safety} provides ways to design a safe minimally deviant controller, 
	these solutions are often either designed \textit{ad-hoc} or centralized.
	The latter case requires global information (the state $\x[t]{}$) to be known at a single unit or at each robot.
	Limited communication range implies that robots cannot access
	the full state in real time.
	Therefore, we need to design distributed safety mechanism and controller,
	while theoretically justifying when this is sufficient.
\end{rem}
	%!TEX root=../main.tex

\section{Certifiably Safe Distributed Control}
\label{sec:distributed-cbf}

In this and the following section,
we discuss the design of a distributed safety certification mechanism and a distributed controller that navigate the system to remain safe. 

In this section,
we first describe the design in the \emph{perfect information-exchange} case. 
With instantaneous communication,
we formally characterize when local information exchange between robots is theoretically sufficient to achieve safety under a distributed control strategy in \autoref{sec:distr-cbf-perfect-info}.
Drawing inspiration from this theoretical result,
we next propose a learning-based approach based on graph neural networks to design a distributed controller that implicitly learns to safely coordinate with other robots in \autoref{sec:distr-cbf-gnn}.
		%!TEX root=../main.tex

\subsection{Theoretical Foundations with Perfect Information Exchange}
\label{sec:distr-cbf-perfect-info}

We first show that,
for a safe set $\calS$ that satisfies Assumptions~\ref{ass:safe-set} and~\ref{ass:dist},
a distributed certificate ensures safety. %certification.
Such a \emph{distributed control barrier function} uses only local information available at each robot.
Let the \emph{(safety) neighborhood} of robot $i$ be defined as $\neigh{i}\doteq\{j\in\robset\setminus\{i\}:\rho(\vp_i-\vp_j) \le 0\}$
and define the augmented vector $\xn{i}\in2^{\calX^R}$ of robot $i$ with $j$th element
\begin{equation}\label{eq:wi}
	\xn{ij} = \vxx_i \ominus \vxx_j \quad \forall \, j\in\neigh{i}.
\end{equation}
Note that $\xn{i}$ is defined whenever $\neigh{i} \neq \emptyset$.
In words,
$\xn{i}$ collects all relative state configurations between robot $i$ and its safety-relevant neighbors in the set $\neigh{i}$.
In fact,
by \cref{ass:dist},
all other robots outside $\neigh{i}$ are irrelevant for safety of robot $i$.
Vector $\xn{i}$ will be used by the distributed CBF to define a dynamic condition on the evolution of robots' states that keeps them safe.
We gather the control inputs of robots in the set $\neigh{i}\cup\{i\}$ in the vector $\un{i}$,
with $j$th element $\u{j}$ for all $j\in\neigh{i}\cup\{i\}$.
By the chain rule,
for a piece-wise differentiable function $h$,
it holds
\begin{equation}\label{eq:distributed-cbf-time-derivative}
	\cbfdot{\xn{i},\un{i}} = \sum_{j\in\neigh{i}} \nabla_{\xn{ij}}h(\xn{i})^\top \dot{\xn{}}_{ij}
\end{equation}
where $\dot{\xn{}}_{ij} = f\lr\x{i},\u{i}\rr - f\lr\x{j},\u{j}\rr$ from~\eqref{eq:wi} and~\eqref{eq:dynamics}.

\begin{definition}[Distributed CBF]\label{def:distributed-cbf}
	Let $\calM\subset2^{\calX^R}$ such that $\vxx\in\safe$ implies $\xn{i}\in\calM \; \forall i\in\robset$.
	A function $h:\calM \to \Real{}$ is a \emph{distributed control barrier function} for~\eqref{eq:dynamics},
	$i\in\robset$,
	if $h$ is continuously differentiable w.r.t. $\xn{i}$ for any fixed $\neigh{i}$ and there exists an extended class $\calK$ function $\alpha$ such that:
	\begin{enumerate}
		\item $h(\xn{i})\ge0$ if and only if $\aggr_{j \in \neigh{i}}\psi(\vxx_i\ominus\vxx_j) \ge 0$;
		\item $\sup_{\un{}}\dot{h}(\xn{i},\un{}) \ge -\alpha(h(\xn{i})) \ \forall i\in\robset, \xn{i}\in\calM$.
	\end{enumerate}
	In this case,
	the \emph{safe control set} associated with $\neigh{i} \cup \{i\}$ is
	\begin{equation}\label{eq:control-distributed-cbf}
		\calU_\safe\lr\xn{i}\rr \doteq \lb\un{}\in\calU^{|\neigh{i}|+1}: \dot{h}(\xn{i},\un{}) \ge -\alpha(h(\xn{i}))\rb.
	\end{equation}
\end{definition}
\begin{thm}[Distributed safety certification]\label{thm:distr-cbf}
	The safe set~\eqref{eq:safe-set} is equivalent to
	\begin{equation}\label{eq:safe-set-local}
		\calS = \lb \vxx \in\calX^\numRobots : \aggr_{j \in \neigh{i}} \psi(\vxx_i\ominus\vxx_j) \geq 0 \; \forall i\in\robset\rb.
	\end{equation}
	Moreover,
	let $h$ be a distributed CBF for~\eqref{eq:dynamics} for all $i\in\robset$.
	Then,
	any Lipschitz distributed controller $\pi:2^{\calX^R}\to \uset$ s.t. $\pi(\xn{i}) \in\calU_\safe(\xn{i})$ makes set $\safe$ forward invariant w.r.t.~\eqref{eq:dynamics}.
\end{thm}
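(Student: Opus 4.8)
The plan is to prove the two claims in order: first the equivalence of the two descriptions of $\calS$, then forward invariance. For the equivalence \eqref{eq:safe-set}\,$\equiv$\,\eqref{eq:safe-set-local}, I would fix a robot $i$ and argue that deleting from the aggregation every robot $k\notin\neigh{i}$ (those with $\rho(\vp_i-\vp_k)>0$, hence $\rho(\vp_i-\vp_k)\ge0$) leaves the sign of $\aggr_{j}\psi(\vxx_i\ominus\vxx_j)$ unchanged, which is exactly the content of \cref{ass:dist} for a single deletion. Applying it successively to strip away all non-neighbors (and excluding the self-term $j=i$ by convention) yields
\[
\sgn\aggr_{j\in\robset}\psi(\vxx_i\ominus\vxx_j)=\sgn\aggr_{j\in\neigh{i}}\psi(\vxx_i\ominus\vxx_j),
\]
so the condition ``$\ge 0$'' is equivalent on both sides, and since this holds for every $i\in\robset$ the two set descriptions coincide (a robot with $\neigh{i}=\emptyset$ being trivially safe). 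The only subtlety is that \cref{ass:dist} is phrased as one deletion from the full index set, so the successive-deletion argument must be read as re-applying it to each reduced index set; equivalently, one checks directly that non-neighbors are sign-irrelevant for the aggregation.

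Having rewritten $\calS$ locally, the first condition in \cref{def:distributed-cbf} lets me replace the aggregate safety condition by the barrier value, $h(\xn{i})\ge0\iff\aggr_{j\in\neigh{i}}\psi(\vxx_i\ominus\vxx_j)\ge0$, so that
\[
\calS=\lb\vxx\in\calX^\numRobots: h(\xn{i})\ge0\ \forall i\in\robset\rb.
\]
Forward invariance of $\calS$ then reduces to showing that, along any closed-loop trajectory starting in $\calS$, each scalar signal $t\mapsto h(\xn{i}(t))$ stays nonnegative. Since $\pi$ is Lipschitz and $\vf$ is locally Lipschitz, the trajectory $\vxx(\cdot)$ is continuous and piecewise $C^1$; on any time interval over which $\neigh{i}$ is constant, $h(\xn{i}(\cdot))$ is $C^1$, and because the applied controls satisfy $\pi(\xn{i})\in\calU_\safe(\xn{i})$ the definition of the safe control set gives the differential inequality $\dot h(\xn{i},\un{i})\ge-\alpha\lr h(\xn{i})\rr$. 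This is precisely the scalar CBF condition, so the comparison-lemma argument behind \cref{thm:cbf} applies verbatim on that interval: if $h(\xn{i})\ge0$ at the start of the interval it remains $\ge0$ throughout.

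The main obstacle is that the neighborhood $\neigh{i}$ is \emph{time-varying}: as robots move, neighbors enter and leave $\neigh{i}$, the vector $\xn{i}$ changes dimension, and $h(\xn{i}(\cdot))$ is only piecewise differentiable and may jump at these switching instants, so the single-interval argument does not immediately chain together. I would resolve this exactly where \cref{ass:dist} is needed a second time: a switch occurs when some robot $j$ crosses the threshold $\rho(\vp_i-\vp_j)=0$, and at such a configuration \cref{ass:dist} guarantees that including or excluding $j$ does not change the sign of $\aggr_{j\in\neigh{i}}\psi$, hence does not change the sign of $h(\xn{i})$ by the first condition of \cref{def:distributed-cbf}. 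Thus, using continuity of the state, $h(\xn{i}(\cdot))$ cannot change sign across a switching instant even if its magnitude jumps, so nonnegativity just before the switch forces nonnegativity just after. Combining this sign-preservation at switches with the per-interval invariance, by induction over the (locally finite) switching times, gives $h(\xn{i}(t))\ge0$ for all $t\ge0$ and all $i\in\robset$, i.e. $\vxx(t)\in\calS$ for all $t\ge0$, which is the claimed forward invariance.
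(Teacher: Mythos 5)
Your proposal is correct and follows essentially the same route as the paper's proof: the set equivalence via \cref{ass:dist} (stripping non-neighbors without changing the sign of the aggregation), forward invariance on intervals of constant neighborhood via the standard CBF argument of \cref{thm:cbf}, and sign preservation of $h(\xn{i})$ at neighborhood switches by invoking \cref{ass:dist} at the boundary $\rho(\vp_i-\vp_j)=0$ reached by continuity. You are somewhat more explicit than the paper about chaining the argument by induction over (locally finite) switching times and about the repeated application of \cref{ass:dist} for successive deletions, but these are refinements of the same proof, not a different one.
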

\begin{proof}
	See \cref{app:proof}.
\end{proof}

In words,
\cref{def:distributed-cbf} means that safety is inherently \emph{local}.
While the networked autonomous system is safe whenever all robots are simultaneously safe according to~\eqref{eq:safe-set},
our characterization of distributed CBF in \cref{def:distributed-cbf} and \cref{ass:dist} imply that each robot $i$ is endowed with its individual safety that depends only on its neighbors.
In particular,
the time derivative $\dot{h}(\xn{i},\un{i})$ in condition~\eqref{eq:control-distributed-cbf} depends only on states and actions of robot $i$ and its neighbors in $\neigh{i}$.
This is key to enable safe distributed control,
whereby coordinating with neighbors is sufficient to formally ensure \emph{local safety} of each robot and in turn \emph{global safety} of the networked system according to \cref{thm:distr-cbf}.
On the other hand,
if at least one robot needed global system knowledge to preserve safety,
a distributed control strategy based on local communication may not work.

Given a distributed CBF $h$,
the constraint~\eqref{eq:control-distributed-cbf} can theoretically be used within a distributed optimization algorithm to compute safe control actions $\u{i}$ for all robots $i$.
However,
this approach raises two practical issues.
On the one hand,
the resulting distributed optimization problem need not be convex,
and convergence is not guaranteed.
On the other hand,
communication delays may disrupt applicability of this approach for online operation since several communication rounds and are typically needed to converge to a feasible solution.
These limitations motivate us to explore a learning-based approach in the next section,
whereby each robot collects information received from nearby robots but does not further communicate when computing the control actions.

\begin{rem}[Communication and safety]
	In real applications,
	robot $i$ needs to communicate at least with all robots in $\neigh{i}$ to implement a distributed controller based on~\cref{thm:distr-cbf}.
	Formally,
	this requires $\rho(\vp_{i}-\vp_j)\le0 \implies \norm{\vp_i-\vp_j}\le R_c$.
	In~\cref{example:collision-avoidance} this is equivalent to $R_c\ge d_\text{coll}$,
	which simply means that robots must communicate with neighbors (at least) before they collide to remain safe.
\end{rem}
		%!TEX root = ../main.tex

\begin{figure*}[t]
	\centering
	\input{figures/architecture-diagram.tikz}
	\caption{%
		\revision{Proposed distributed controller implemented on robot $i$. 
		Robot $i$ receives information from nearby robots $\ell$, $j$, and $k$ and computes minimally invasive control actions $\u[t]{i}$. 
		Communication delays $\delta(t)$ are compensated by the predictor $\predmodel$ (red block).
		The GNN-based controller $\controlmodel$ (green block) computes corrective actions $\upert[t]{i}$ to ensure safety.}
	}
	\label{fig:architecture-diagram}
\end{figure*}
		%!TEX root = ../main.tex

\subsection{Learning a Safe Distributed Controller}
\label{sec:distr-cbf-gnn}

\Cref{thm:distr-cbf} provides formal ground for the design of a distributed safety certification mechanism and controller.
We identify two main benefits of a learning-based design.
Firstly,
it circumvents the need to find a CBF via optimization-based parametric methods,
which can be computationally intractable.
Secondly,
it can allow robots to implicitly coordinate their actions by just processing the received information in $\data{i}{j}$,
without resorting to an iterative distributed algorithm.

\cref{ass:safe-set} and~\cref{def:distributed-cbf} imply that the distributed CBF is \emph{permutation-invariant},
meaning that any two neighbors of robot $i$ can be swapped without changing the value $\cbfval{\xn{i}}$.
Moreover,
recall that we assume all robots share the same dynamics and state and control sets.
Therefore,
embedding the information exchanged between robots into edge features,
we can model the distributed CBF $h$ as a GNN $\cbfmodel$ that operates on the graph $\calG(t) = (\robset, \calE(t))$,
where the directed edge $(i,j)\in\calE(t)$ if $\data{i}{j}(t)$ is nonempty,
\ie if robot $i$ has knowledge of robot $j$ at time $t$.
Under perfect information exchange,
this corresponds to robots $i$ and $j$ being within communication range at time $t$,
and hence we set $\data{i}{j}(t) = \mess{j}{}{t} = \{\x[t]{j}\}$ if $\norm{\vp_{i}(t)-\vp_j(t)}\le R_c$,
as this fully describes the current state of neighbor $j$,
and $\data{i}{j}(t) = \emptyset$ otherwise.
With this choice,
the edge feature $\featedge{i}{j}(t)$ from node (robot) $j$ to node $i$ in the GNN is the relative state $\xn[t]{ij}$,
which robot $i$ can compute online after receiving message $\mess{j}{}{t}$ from robot $j$ at time $t$.
We model the corrective controller $\controller$ as a second graph neural network $\controlmodel$ operating on graph $\calG(t)$ that computes corrective actions as 
\begin{equation}\label{eq:controller-gnn}
	\upert[t]{i} = \begin{cases}
		\control{\xn[t]{i}},	& \neigh{i}\neq\emptyset\\
		0,						& \mbox{otherwise}.						
	\end{cases}
\end{equation}
In the following,
we refer to~\eqref{eq:controller-gnn} as (distributed) GNN-based controller.
We jointly train the models $\cbfmodel$ and $\controlmodel$ using the following loss,
averaged across all robots in $\robset$,
\begin{equation}
	\label{eq:loss-cbf-controller}
	\calL(\paramcbf, \paramcontr) = \frac{1}{R}\sum_{i \in \robset}\lr \losscbf[i](\paramcbf,\paramcontr) + \losscontr[i](\paramcontr)\rr.
\end{equation}
The loss term related to the distributed CBF conditions is
\begin{multline}\label{eq:loss-cbf}
	\losscbf[i](\paramcbf,\paramcontr) \doteq w_\safe\sum_{\x{i}\sim\safe} \pos{-\cbfval[\paramcbf]{\xn{i}}+\epsilon} \\
	+ w_{\unsafe}\sum_{\x{i}\sim\unsafe} \pos{\cbfval[\paramcbf]{\xn{i}}+\epsilon}\\
	+ w_\text{der}\sum_{\x{i}\sim\calX} \pos{-\dot{\cbfmodel}(\xn{i},\un{i}) - \alpha(\cbfval[\paramcbf]{\xn{i}}) + \epsilon},
\end{multline}
where $\un{i}$ in the argument of the time derivative $\dot{\cbfmodel}$ gathers the control inputs of robots in the neighborhood $\neigh{i}\cup\{i\}$,
and each of these is computed as
\begin{equation}
	\u{j} = \control{\xn{j}}+\uref{j} \qquad \forall j\in\neigh{i}\cup\{i\},
\end{equation}
The loss term that forces minimally invasive corrective control is
\begin{equation}\label{eq:loss-control}
	\losscontr[i](\paramcontr) \doteq w_\text{contr} \sum_{\x{i} \sim \calX} \norm{\control{\xn{i}}}.
\end{equation}
For notation simplicity,
we write $\x{i}\in\safe$ (resp. $\x{i}\in\unsafe$) in~\eqref{eq:loss-cbf} if robot $i$ is safe (resp. unsafe).\footnote{
	The safe set is defined for (the neighborhood of) each robot $i$ according to \cref{ass:safe-set} and~\eqref{eq:safe-set-local}.
}
The coefficients $w_\safe$,
$w_{\unsafe}$,
$w_\text{der}$,
and $w_\text{contr}$ are training hyperparameters.
The first two terms (summations) in~\eqref{eq:loss-cbf} correspond to states sampled respectively from the safe set and the unsafe set,
whereas the third term enforces the dynamic condition of the distributed CBF.
While this is theoretically required only for states in the safe set,
we apply it to all training samples for robustness.
During training,
the time derivative $\dot{\cbfmodel}$ could be analytically computed by separately evaluating the gradient $\nabla_{\xn{i}}\cbfmodel(\xn{i})$ w.r.t. input features $\xn{i}$ and the time derivative $\dot{\vw}_i$ using~\eqref{eq:dynamics};
see~\eqref{eq:distributed-cbf-time-derivative}.
Alternatively,
a computationally cheaper way is to numerically approximate $\dot{\cbfmodel}$ by a finite difference,
\eg
\begin{equation}\label{eq:cbf-derivative}
	\cbfdot[\paramcbf]{\xn[t]{i},\un{i}(t)} \approx \dfrac{\cbfmodel(\xn[t+\step]{i})-\cbfmodel(\xn[t]{i})}{T_\text{s}},
\end{equation}
where $\step$ is a discrete time step and the state $\x[t+T_\text{s}]{i}$ of robot $i$ is generated from $\x[t]{i}$ through (discretized) dynamics~\eqref{eq:dynamics} applying the control action $\u[t]{i}$ computed with the learned controller as $\u[t]{i} = \uref[t]{i} + \control{\xn[t]{i}}$.
This model-free approach can also improve robustness to model uncertainty in~\eqref{eq:dynamics}.
Details on the learning architecture used in our experiments are provided in \cref{app:exp-models}.

\begin{rem}[Training samples]\label{rem:training}
	Instead of randomly sampling the state space,
	training samples can be drawn from simulated state trajectories.
	This helps to generate especially unsafe states that could occur during operation (\eg collisions) more easily than random samples.
	Moreover,
	sampling from trajectories allows one to approximate the time derivative of the distributed CBF directly from data,
	for example using~\eqref{eq:cbf-derivative}.
\end{rem}
	%!TEX root=../main.tex

\section{Safe Distributed Control with Realistic Information Exchange}
\label{sec:distr-cbf-realistic-information}

In the realistic information-exchange case, 
robot $i$ does not have instantaneous access to neighbors' states because messages transmitted by other robots are delayed.
This means that the distributed CBF and controller cannot use the actual value of $\xn[t]{i}$ to compute $\u[t]{i}$.
To tackle this,
we design a \emph{predictor} that estimates $\xn[t]{i}$ based on information received till time $t$.
%However, a delayed version of $\vw_i$ can be constructed from the available information $\calI_{i \leftarrow j}(t)$. 
%%
%Let $\vxx_{j|i}(t)$ and $\vu_{j|i}(t-1)$ denote the last communicated state and action of robot $j$ to robot $i$. Each robot $i$ can now construct
%%
%\begin{multline}
%\label{eq:wi-t}
%	\vw_i[j](t) = \left( \phi(\x[t]{i}) - \phi(\vxx_{j|i}(t)) \right) \\
%	\cdot \indicator{\norm{\vp_j(t) - \vp_{j|i}(t)} \leq R_c}, 
%\end{multline}
%%
%which is a delayed version of $\vw_i$ (see~\eqref{eq:wi}) at time $t$. 
%%
%Robot $i$ can do this because $(\vxx_{j|i}(t), \vu_{j|i}(t-1), t - \Delta(t)) \in \calI_{i \leftarrow j}(t)$, where $\Delta(t)$ denotes the \emph{age-of-information}~\cite{}, which is the time elapsed since the information was generated at robot $j$. 
\remove{The GNN module in $\control{\cdot}$ can process translation-invariant input data,
	thus we design the predictor to estimate \textit{relative robot configurations} at current time.
	Given discretized dynamics,
	we define the \textit{relative message} of robot $j$ at robot $i$ at time $t$ as
	\begin{equation}\label{eq:info-relative}
		\mess{j}{i}{t} \doteq \lb  \x[t']{i} - \x[t']{j}, \u[t'-1]{i} - \u[t'-1]{j}, t-t' \rb
	\end{equation}
	and the \textit{relative dataset} at robot $i$ referring to robot $j$ as
	\begin{equation}\label{eq:dataset-relative}
		\datarel{i}{j}(t) \doteq \lb \mess{j}{i}{t'}: \aoi{i}{j}{t;\mess{j}{{}}{t'}}\le\aoimax\rb.
	\end{equation}
	Relative message $\mess{j}{i}{t}$ is easily computed in a distributed fashion at robot $i$ from message $\mess{j}{{}}{t}$ and past states and actions of $i$.
	%and current time.
	To leverage data streams received from neighbors,
	we propose that the predictor is parametrized as a Recurrent Neural Network (RNN) that can learn correlations across message sequences in $\datarel{i}{j}(t)$.
	%messages received by robot $i$ overtime.
	In words,
	this means that robot $i$ can infer the relative configuration of robot $j$ by ``projecting'' past states and actions to the current time.
	In view of~\eqref{eq:dataset-relative},
	we re-write the prediction as
	\begin{equation}\label{eq:prediction}
		\Delta\xhat[i,j]{t} = \pred{\datarel{i}{j}(t)}.
	\end{equation}
	The predictor is designed to learn relative robot configurations at the current time.
	Hence,
	we define its loss as %the following quantity: % at each time $t$:
	\begin{equation}\label{eq:loss-predictor}
		\losspred\lr\parampred\rr \doteq \dfrac{1}{\sum_{i\in \robset}|\neighrec[t]{i}|} \sum_{\substack{i\in \robset\\j\in\neighrec[t]{i}}} \dfrac{\norm{\Delta\xhat[i,j]{t} - \Delta\x[i,j]{t}}}{\norm{\Delta\x[i,j]{t}}}
	\end{equation}
	where $\robset$ gathers all robots,
	and $\Delta\x[i,j]{t} \doteq \x[j]{t} - \x[j]{t}$.
	Loss~\eqref{eq:loss-predictor} drives the predictor to learn current state differences $\Delta\x[i,j]{t}$ from past relative information $\datarel{i}{j}(t)$.}%

To counterbalance communication delays,
robots need to transmit more information compared to the perfect information-exchange case.
Considering time-slotted communication with time index $t$,
robot $j$ sends the following message at time $t$:
\begin{equation}
	\label{eq:info-ij-att}
	\mess{j}{}{t} = \lb\x[t]{j}, \vu_j(t-1), t\rb,
\end{equation}
where $\u[t-1]{j}$ is the latest control input computed before time $t$.
Note that robot $j$ \emph{cannot} transmit $\u[t]{j}$ at time $t$ since this is \emph{computed} at time $t$.
%
%	where $t$ is the transmission time-stamp.
We let robots transmit previous control actions with the intuition that this helps the neighbors to ``reconstruct'' its trajectory,
enhancing prediction quality.
For a message $I_j(t')$ that has been received by robot $i$ at time $t$,
robot $i$ constructs the corresponding \textit{relative message} as
\begin{equation}\label{eq:info-relative}
	\mess{j}{i}{t,t'} \doteq \lb  \x[t']{i} \ominus \x[t']{j}, \u[t'-1]{i} \ominus \u[t'-1]{j}, t-t' \rb
\end{equation}
and the \textit{relative dataset} at robot $i$ referred to robot $j$ is
\begin{equation}\label{eq:dataset-relative}
	\datarel{i}{j}(t) \doteq \lb \mess{j}{i}{t,t'}: t'\in\calT_{i \leftarrow j}(t)\rb.
\end{equation}
The time lag $t-t'$ in~\eqref{eq:info-relative} is the \emph{Age-of-Information} that represents staleness of data in $\mess{j}{i}{t,t'}$.
The predictor $\lambda$ of robot $i$ maps received data as
\begin{equation}\label{eq:prediction-map}
	\lambda: \datarel{i}{j}(t) \longmapsto\predval{ij}{t},
\end{equation}
where the output $\predval{ij}{t}$ is intended to be an estimate of $\xn[t]{ij}$.
In words,
robot $i$ infers the current relative state of robot $j$ by ``projecting'' past states and actions to the current time.
Note that this mapping makes sense only if the relative dataset $\datarel{i}{j}(t)$ is not empty,
that is,
if $j$ is considered a neighbor at time $t$.
To reduce storage requirements and bound the message-passing graph,
robots discard all relative messages with age greater than a threshold $\aoimax$,
so that $j$ is removed from the neighbors of $i$ when no data with age at most $\aoimax$ are received.
The output of the predictor is fed to the distributed CBF and controller in place of the unknown $\xn[t]{ij}$.
Specifically,
the predictor-based controller computes corrective actions as
\begin{equation}\label{eq:controller-predictor}
	\upert[t]{i} = \begin{cases}
		 \controlmodel(\predval{i}{t}),	& \exists j : \datarel{i}{j}(t)\neq\emptyset\\
		 0, 						& \mbox{otherwise}.
	\end{cases}
\end{equation}
We refer to~\eqref{eq:controller-predictor} as predictor-based (distributed) control.
The overall predictor-controller architecture for realistic-information exchange is schematized in~\autoref{fig:architecture-diagram}.
\revision{The dashed lines show that the CBF model $\cbfmodel$ (blue block) is used in training to enforce that the learned corrective control actions $\upert[t]{i}$ satisfy the safety conditions,
	but it is not deployed during operation.}
		%!TEX root = ../main.tex

\subsection{Learning the Predictor}
\label{sec:learn-predictor}

In view of the potentially complex dynamics~\eqref{eq:dynamics} with the nonlinear controller $\controlmodel$,
we parametrize the predictor as a neural network $\predmodel$.
This also allows us to integrate the predictor learning into the same training procedure used for the GNN-based distributed CBF and controller.
We train the predictor with the following loss that penalizes the relative mismatch from the output $\predval{ij}{t}$ to the current value $\xn[t]{ij}$:
\begin{equation}\label{eq:loss-pred}
	\losspred(\parampred) = \sum_{i,j \in \robset:\datarel{i}{j}(t)\neq\emptyset}
	\frac{\norm[\calX]{\predval{ij}{t} \ominus \xn[t]{ij}}}{\norm[\calX]{\xn[t]{ij}}}.
\end{equation}
When training both predictor model and distributed CBF and controller models in the same training session,
we replace all values computed with perfect information,
namely $\cbfval{\xn[t]{i}}$ and $\control{\xn[t]{i}}$,
with values computed from predicted information,
namely $\cbfval{\predval{i}{t}}$ and $\control{\predval{i}{t}}$,
in the loss terms $\losscbf[i]$ and $\losscontr[i]$ in~\eqref{eq:loss-cbf-controller} .
However,
we have experimentally noted that learning accuracy improves when computing the time derivative $\dot{\cbfmodel}$ of the distributed CBF with actual values of $\xn[t]{i}$ rather than with predicted values $\predval{i}{t}$.
Importantly,
this can be done because the derivative $\dot{\cbfmodel}$ is explicitly computed only when training the distributed CBF model $\cbfmodel$ offline,
but it is not needed during online operation,
when only the controller $\controlmodel$ is deployed.
Moreover,
we have experimentally observed that alternatively training (updating) the predictor and,
jointly,
CBF and controller favors learning stability.
This happens because the predictor is trained under a consistent control behavior,
while CBF and controller learn corrective actions for given prediction capabilities.
Conversely,
jointly training all three models resulted in learning failure.

Under realistic information-exchange,
it makes sense to draw training samples from simulated trajectories rather than randomly from the state set $\calX$.
In fact,
this allows us to train both predictor and distributed CBF and controller with states and control actions that are consistent with data received under delays.
In contrast,
randomly sampling states $\x[t]{i}$ from $\calX$ would also require to randomly generate relative messages $\mess{j}{i}{t,t'}$ that need not be consistent with feasible trajectories.
		%!TEX root = ../main.tex

\subsection{Heuristic Control to Enhance Performance}\label{sec:heuristic}

Learning-based controllers can hardly learn zero actions $\upert[t]{i}$,
which causes continuous perturbations of the nominal input $\uref[t]{i}$ if neighbors are present.
This means that applying the learning-based controller may make the robots deviate from the desired behavior even when no corrective action is required.
This effect may be counteracted by implementing task-specific adjustments.
For example,
in the case of point-stabilization or goal-reaching,
robots may switch to the nominal controllers as soon as they are sufficiently close to the goal. %and no other robots are detected in between.
While such \textit{ad-hoc} strategies can be implemented with system-specific knowledge,
in this section we explore a task-agnostic strategy that draws inspiration from the distributed CBF condition.
The key intuition is that the higher the value of the CBF,
the ``safer'' a robot is.
Hence,
CBF values may be thresholded to assess if corrective actions are actually needed.
%In particular,
We propose the following switching control that corrects nominal actions according to the condition on the distributed CBF dynamics:
\begin{equation}\label{eq:control-heuristic}
	\uperth[t]{i} = \begin{cases}
		0							& \cbfdot{\predval{i}{t},\control{\predval{i}{t}}}+\alpha\cbfval{\predval{i}{t}}\ge\varphi\epsilon\\
		\upert[t]{i}	& \mbox{otherwise}.
	\end{cases}
\end{equation}
The parameter $\epsilon$ in~\eqref{eq:control-heuristic} is the same used in training and
the tunable coefficient $\varphi\ge0$ favors safe (large $\varphi$) or aggressive behavior (small $\varphi$).
In words,
control~\eqref{eq:control-heuristic} uses the nominal input if the next state is deemed safe by the learned CBF and applies corrective actions otherwise.
Crucially,
the time derivative $\cbfdot{\predval{i}{t},\un{i}(t)}$ cannot be computed in a distributed fashion during operation.
To implement~\eqref{eq:control-heuristic} in a distributed way,
robots compute $\predval{ij}{t+1}$ with the predictor given available information $\datarel{i}{j}(t)$,
and locally estimate the time derivative as
\begin{equation}\label{eq:cbf-derivative-est}
	\cbfdot{\predval{i}{t},\control{\predval{i}{t}}} \approx \dfrac{\cbfval{\predval{i}{t+1}} - \cbfval{\predval{i}{t}}}{T_\text{s}}.
\end{equation}

\begin{rem}[Heuristic control]
	This control strategy is purely heuristic and need not retain any safety guarantee of the distributed CBF-based learned controller.
	We use~\eqref{eq:control-heuristic}--\eqref{eq:cbf-derivative-est} in our experiments just to compare it to the GNN-based controller,
	to test if we can improve performance sacrificing a little safety.
\end{rem}
	
	%!TEX root = ../main.tex

\section{Experiments}
\label{sec:experiments}

We test our proposed controller for multi-robot navigation,
where the robots have to reach goals while avoiding collisions.\footnote{
	The code for training and simulations is available at \url{https://github.com/lucaballotta/macbf-gnn}.
}
		%!TEX root=../main.tex

\subsection{Experiment Setup}
\label{sec:exp-setup}

\begin{figure}
	\begin{framed}
		\centering
		\includegraphics[width=.8\linewidth]{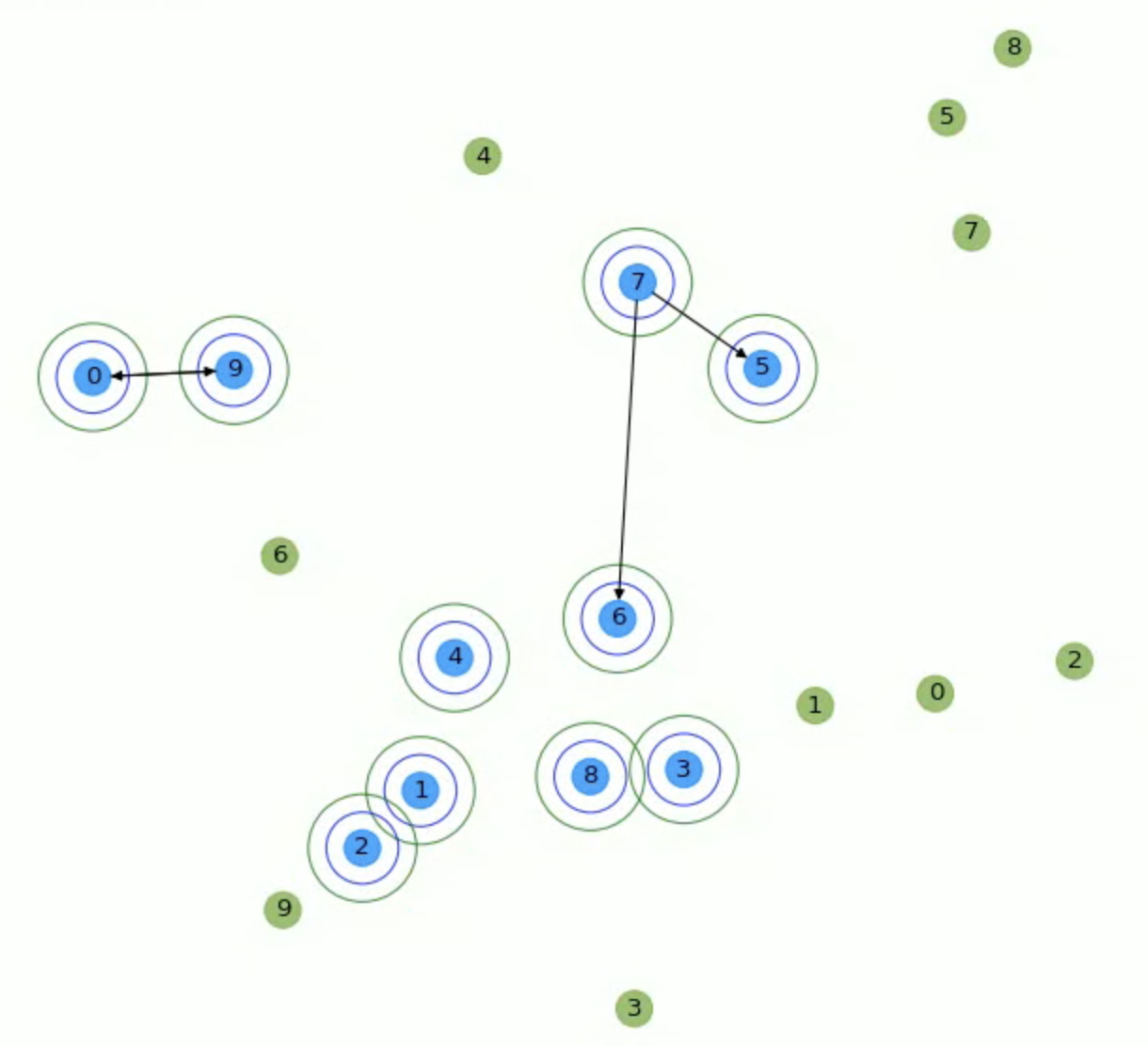}
	\end{framed}
	\caption{\revision{Snapshot of simulation.
		Robots (blue) have to reach goals (green) without colliding.
		The inner circles around robots marks the closest distance before collision.
		The outer circles determine the safe set,
		namely robots outside them are tagged safe in training.}
	}
	\label{fig:exp}
\end{figure}

\myParagraph{Control task}
We implement and study our proposed approach for a multi-robot system concerned with collision avoidance as in \cref{example:collision-avoidance}.
The simulation snapshot in \autoref{fig:exp} illustrates the experimental setup.
We simulate mobile robots as 2D balls with radius $0.05$ that move on the planar workspace $[0, 3] \times [0,3] \subset\Real{2}$.
Each robot (\revision{blue circles in \autoref{fig:exp}}) has to reach a pre-specified goal location (\revision{green circles}) starting from an initial location,
all different across robots.
To fulfill this task,
each robot is equipped with a nominal controller $\controllernom$ that drives it from its current location to its goal through a straight line to optimize control effort,
which may result in collisions.
Two robots collide if the distance between their centers drops below $d_\text{coll}=0.1$ (\revision{innermost ring around each robot in \autoref{fig:exp}}),
whereas we tag safe robot states when their distance is at least $0.2$ (\revision{outermost ring}).
The communication radius is $R_c=1$.
The arrows connecting pairs of robots in \autoref{fig:exp} show the edges $\calE$ of the message passing graph $\calG$ used by the distributed GNN-based controller.
Several edges are missing because the snapshot is taken at the beginning of an episode ($t=0$) and few messages $\mess{j}{}{0}$ have been received because of communication delays,
hence many message datasets $\data{i}{j}$ are still empty.

\myParagraph{Robot dynamics}
We consider two dynamical models.

\begin{description}
	\item[Single integrator:] The state of robot $i$ evolves as
	\begin{equation}
		\label{eq:agent-dynamics-single-integrator}
		\xdot[t]{i} = \u[t]{i}
	\end{equation}
	with $\x[t]{i} = \vp_i(t) \in \Real{2}$.
	Even though this model is simple, 
	it reliably represents robot motion provided that low-level controllers track the reference velocity fast enough compared to position tracking~\cite{Cavorsi22rss-resilienceCBF,Capelli21icra-connectivityCBFDelays,Lee13tm-teleoperation,Rossi23automatica-sampledCommunication}.
	We set a speed limit on each direction as $\norm[1]{\u{i}}\le 0.4$.
	\item[Dubins car:] The state of robot $i$ is a $4$-dimensional vector that stacks respectively position $\vp_{i}\in\Real{2}$,
	speed $v_i\in\Real{}$,
	and heading angle $\vartheta_i\in\Real{}$.
	Speed and heading angle are directly actuated.
	The dynamics are
	\begin{equation}
		\label{eq:agent-dynamics-dubins-car}
		\begin{bmatrix}
			\multirow{2}{*}{$\dot{\vp}_{i}(t)$}\\
			\\
			\dot{v}_i(t)\\
			\dot{\vartheta}_i(t)
		\end{bmatrix} = 
		\begin{bmatrix}
			v_i(t)\cos(\vartheta_i(t))\\
			v_i(t)\sin(\vartheta_i(t))\\
			\multirow{2}{*}{$\u[t]{i}$}\\
			\\
		\end{bmatrix}.
	\end{equation}
	Robots are subject to acceleration limit $|u_{i,1}| \le 10$ and angular speed limit $|u_{i,2}| \le 100$.
\end{description}

At each simulation step,
we discretize the dynamics via forward Euler with time step $\step = 0.03$.

\myParagraph{Communication model}
We model communication delays as Poisson random variables with expectation that is proportional to the number of neighbors within communication range.
For each robot $i$,
transmitted message $I_i(t)$ incurs $\delays[i]{t}$ delay steps,
where
\begin{equation}\label{eq:comm-delays-sim}
	\delays[i]{t} \sim \mathrm{Pois}(c_\text{del}|\neigh[t]{i}|),
\end{equation}
$\neigh[t]{i} \doteq \left\{ j\in\robset : \norm{\vp_i(t) - \vp_j(t)} \leq R_c\right\}$ gathers all robots within communication range at time $t$,
and the coefficient $c_\text{del}$ depends on channel conditions and transmission power.
Note that larger $c_\text{del}$ or more neighbors induce longer delays.
While we \emph{generate} delays according to~\eqref{eq:comm-delays-sim},
this model is not used either to train or to test the models.
Moreover,
note that delays are in general asymmetric and vary overtime.

\myParagraph{Training}
We first train controllers under perfect information exchange according to \autoref{sec:distr-cbf-gnn},
by manually setting to zero all transmission delays.
We then train pairs of predictor and controller models as discussed in \autoref{sec:distr-cbf-realistic-information}.
For the single integrator model we use two delay coefficients $c_\text{del}\in\{0.5,0.8\}$,
whereas for the Dubins car model we set $c_\text{del} = 1.5$.
To generate training samples,
we simulate trajectories of $\numRobots=10$ robots by running multiple training episodes.
We randomly assign start and goal locations to all robots at the beginning of each episode.
Training episodes end either when the distance between all robots and their respective goals is at most $0.02$ or if a pre-defined deadline of $500$ simulation steps expires.
At each simulation step,
we store all robot states $\x{}$ and received messages $\data{i}{j}$ in a buffer,
from which we periodically draw a batch of training samples to update the models.
The loss used for each update is the average loss across samples of the batch.
To avoid imbalance in training the CBF,
we sample an equal number of safe and unsafe states or,
when this is not possible,
we draw all unsafe states (which are usually fewer than safe states) from the buffer.
We update the models every $\Delta_\text{train} = 512$ simulation steps for $n_\text{desc} = 10$ descent steps (backpropagation),
For the single integrator model,
we update predictor or CBF/controller $ 10$ consecutive times before switching to the other model(s).
For the Dubins car model,
we first train only the controller under perfect information exchange,
and then train only the predictor under communication delays.
We use an $\varepsilon$-greedy on-policy scheme,
\ie control actions applied during training are computed with the trained controller with probability $1-\varepsilon$
and with only the nominal controller $\pi_n$ with probability $\varepsilon$.
We set a decreasing probability $\varepsilon_t = \nicefrac{1}{t}$ to favor exploration during early iterations.
In the perfect information-exchange case,
we train CBF and controller models for $500000$ simulation steps.
In the realistic information-exchange case we train CBF, controller, and predictor for a million simulation steps with single integrator dynamics,
taking about $5$ hours on an NVIDIA RTX A4000,
while for the Dubins car model we train the predictor for $2.5$ million simulation steps.
Details about the learning architecture and neural network models are provided in \cref{app:exp-models}.
		%!TEX root=../main.tex

\subsection{Results and Discussion}
\label{sec:exp-results}

\begin{figure}
	\centering
	%!TEX root=../main.tex

\begin{tikzpicture}
	
	\begin{axis}[
		width=.8\linewidth,
		height=.45\linewidth,
		legend cell align={left},
		legend style={font=\small},
		legend pos=south west,
		tick align=inside,
		tick pos=left,
		x grid style={white!70!black},
		xlabel={Number of robots $R$},
		xmajorgrids,
		xtick style={color=black},
		y grid style={white!70!black},
		ylabel={$\srate$},
		ymajorgrids,
		ytick style={color=black},
		xtick={10, 12, 15, 18, 20},
		tick label style={font=\footnotesize},
		label style={font=\small}
		]
		\addplot [semithick, mark=square*, mark size=3, mark options={solid}, blue]
		table {%
			10 1
			12 1
			15 1
			18 1
			20 1
		};
		\addlegendentry{Single integrator}
		\addplot [semithick, mark=*, mark size=3, mark options={solid}]
		table {%
			10 1
			12 .95
			15 .92
			18 1
			20 .82
		};
		\addlegendentry{Dubins car}
	\end{axis}
	
\end{tikzpicture}
	\caption{\revision{Safety rates with controller~\eqref{eq:controller-gnn} in the perfect information-exchange case.}
	}
	\label{fig:safety}
\end{figure}

To evaluate safety,
we compute the \emph{safety rate} as the fraction of safe test episodes:
\begin{equation}
	\srate \doteq \dfrac{\#\mbox{ test episodes without collisions}}{\#\mbox{ test episodes}}.
\end{equation}
Index $\srate$ does not account for how many collisions happen within a single episode,
so it considers any episode unsafe even if a single collision occurs.
For all tests,
we average results over $100$ Monte Carlo runs.

We first address safety under perfect information exchange,
\ie instantaneous inter-robot communication,
and apply our distributed GNN-based controller~\eqref{eq:controller-gnn} jointly trained with the distributed control barrier function model as discussed in \autoref{sec:distr-cbf-gnn}.
We train all models with $R=10$ and test how safety rate varies as we increase the number of robots deployed during testing.
By doing so,
we actually increase \emph{density} of robots moving in the workspace since the latter has fixed size.
The results are shown in \autoref{fig:safety}.
The learned controller for single integrator model easily scales to double the robot density without decreasing its safety rate,
which is always $1$.
The Dubins' car model instead causes oscillations of the safety rate due to the more complex nonlinear behavior.
Nonetheless,
the safety rate remains over $80\%$ even when doubling robot density.

\begin{figure}
	\centering
	\subfloat[Single integrator model.\label{fig:safety_del_si}]{
		%!TEX root=../main.tex

\begin{tikzpicture}
	
	\begin{axis}[
		width=.8\linewidth,
		height=.4\linewidth,
		tick align=inside,
		tick pos=left,
		x grid style={white!70!black},
		xlabel={Communication delay coefficient $c_\text{del}$},
		xmajorgrids,
		xtick style={color=black},
		y grid style={white!70!black},
		ylabel={$\srate$},
		ymajorgrids,
		ytick style={color=black},
		ytick={0.4, 0.5, 0.6, 0.7, 0.8, 0.9, 1.0},
		xtick={0.2, 0.5, 0.8},
		tick label style={font=\footnotesize},
		label style={font=\small}
		]
		\addplot [semithick, mark=triangle*, mark size=3, mark options={solid}, red]
		table {%
			0.2 0.59
			0.5 0.45
			0.8 0.38
		};
		\addplot [semithick, mark=*, mark size=3, mark options={solid}]
		table {%
			0.2 0.99
			0.5 0.98
			0.8 0.93
		};
	\end{axis}
	
\end{tikzpicture}
	}
	\\
	\subfloat[Dubins car model.\label{fig:safety_del_dc}]{
		%!TEX root=../main.tex

\begin{tikzpicture}
	
	\begin{axis}[
		width=.8\linewidth,
		height=.5\linewidth,
		legend cell align={left},
		legend style={font=\small,fill opacity=.75},
		legend pos=south west,
		tick align=inside,
		tick pos=left,
		x grid style={white!70!black},
		xlabel={Communication delay coefficient $c_\text{del}$},
		xmajorgrids,
		xtick style={color=black},
		y grid style={white!70!black},
		ylabel={$\srate$},
		ymajorgrids,
		ytick style={color=black},
		ytick={0.4, 0.5, 0.6, 0.7, 0.8, 0.9, 1.0},
		xtick={0.5, 0.8, 1.0, 1.2},
		tick label style={font=\footnotesize},
		label style={font=\small}
		]
		\addplot [semithick, mark=triangle*, mark size=3, mark options={solid}, red]
		table {%
			0.5 0.82
			0.8 0.61
			1.0 0.58
			1.2 0.37
		};
		\addlegendentry{GNN controller}
		\addplot [semithick, mark=*, mark size=3, mark options={solid}]
		table {%
			0.5 0.89
			0.8 0.84
			1.0 0.83
			1.2 0.68
		};
		\addlegendentry{GNN controller + predictor}
	\end{axis}
	
\end{tikzpicture}
	}
	\caption{\revision{Safety rates with controller~\eqref{eq:controller-gnn} (``GNN controller'') and with predictor-based controller~\eqref{eq:controller-predictor} (``GNN controller + predictor'') in the realistic information-exchange case.
			Our proposed control architecture sizably improves safety under communication delays.}
	}
	\label{fig:safety_del}
\end{figure}

We then turn to the realistic information-exchange setting,
where we compare the two distributed control approaches with and without predictor.
Figure~\ref{fig:safety_del} conveys the main message.
Awareness of communication delays is crucial to effectively handle data received by nearby robots during operation.
The ``GNN controller'' is the distributed GNN-based controller~\eqref{eq:controller-gnn} trained under perfect-information exchange with no prediction.
The ``GNN controller + predictor'' is the predictor-based controller~\eqref{eq:controller-predictor} illustrated in~\autoref{fig:architecture-diagram} trained under realistic-information exchange,
where the predictor estimates the relative states $\xn{ij}$ from delayed information $\datarel{i}{j}$ tagged with AoI.
The plot shows that the predictor-based controller sizably increases the safety rate as opposed to the control~\eqref{eq:controller-gnn} that neglects communication delays.
This witnesses that the CBF condition critically depends on timeliness of information,
especially at the boundary between safe and unsafe regions (\ie when two robots are about to collide).
To test ``GNN controller + predictor'' trained under realistic information-exchange for single integrator dynamics (\cref{fig:safety_del_si}),
we use the same delay coefficient $c_\text{del}$ set in training,
except for the test with delay coefficient $c_\text{del} = 0.2$ where we use the controller trained with $c_\text{del} = 0.5$.
Conversely,
the predictor for the Dubins car model is trained under a large coefficient $c_\text{del} = 1.5$ and we study safety for smaller delays (\cref{fig:safety_del_dc}). 
The tests suggest that the controllers generalize to delay distributions with smaller delay expectations.
It is important to remark that we consider \emph{distributed controllers} because we directly deploy the trained GNN-based models $\pi_{\xi}$.
			%!TEX root=../main.tex

\begin{figure}
	\centering
	%!TEX root=../main.tex

\begin{tikzpicture}
	
	\begin{axis}[
		width=.8\linewidth,
		height=.5\linewidth,
		legend cell align={left},
		legend style={fill opacity=.75, draw opacity=1, text opacity=1, font=\small},
		legend columns=2,
		legend pos=north west,
		tick align=inside,
		tick pos=left,
		x grid style={white!70!black},
		xlabel={Communication delay coefficient $c_\text{del}$},
		label style={font=\small},
		tick label style={font=\footnotesize},
		xmajorgrids,
		xtick style={color=black},
		y grid style={white!70!black},
		ylabel={Distance from goal},
		ymajorgrids,
		ymax=0.2,
		ytick style={color=black},
		xtick={0.2, 0.5, 0.8},
		xticklabels={0.2, 0.5, 0.8}
		]
		\addplot [mark=*, mark size=3, mark options={solid}]
		table {%
			0.2 0.038
			0.5 0.071
			0.8 0.118
		};
		\addlegendentry{Realistic-info}
		\addplot[black, ultra thick]
		coordinates {(.2,.029)(.8,.029)};
		\addlegendentry{Perfect-info}
 		\addplot[blue, ultra thick, dashed]
 		coordinates {(.2,.02)(.8,.02)};
 		\addlegendentry{Target distance}
 		\addplot[red, ultra thick, dotted]
 		coordinates {(.2,.05)(.8,.05)};
 		\addlegendentry{Robot radius}
	\end{axis}
	
\end{tikzpicture}
	\caption{Average final distance from goal with trained GNN-based controller.}
	\label{fig:performance}
\end{figure}
\begin{figure}
	\centering
	\subfloat[Average trajectory length.
	\label{fig:performance-heuristics}
	]{%!TEX root=../main.tex

\begin{tikzpicture}
	
	\begin{axis}[
		width=.5\linewidth,
		height=.45\linewidth,
		tick align=inside,
		tick pos=left,
		legend style={at={(0.03,0.16)}, anchor=south west, fill opacity=.75, draw opacity=1, text opacity=1, font=\small},
		x grid style={white!70!black},
		xlabel={Delay coefficient $c_\text{del}$},
		xmajorgrids,
		xtick style={color=black},
		y grid style={white!70!black},
		ylabel={Trajectory length},
		label style={font=\small},
		tick label style={font=\footnotesize},
		ymajorgrids,
		ytick style={color=black},
		xtick={0.2, 0.5, 0.8},
		xticklabels={0.2, 0.5, 0.8}
		]
		\addplot [mark=*, mark size=3, mark options={solid}]
		table {%
			0.2 308.95
			0.5 313.58
			0.8 314.77
		};
		\addlegendentry{Realistic-info}
		\addplot [ultra thick, dashed] 
		coordinates {(.2,292.14) (.8,292.14)};
		\addlegendentry{Perfect-info}
		
	\end{axis}
	
\end{tikzpicture}}
	\subfloat[Safety rate.
	\label{fig:safety-heuristics}
	]{%!TEX root=../main.tex

\begin{tikzpicture}
	
	\begin{axis}[
		width=.5\linewidth,
		height=.45\linewidth,
		tick align=inside,
		tick pos=left,
		x grid style={white!70!black},
		xtick style={color=black},
		y grid style={white!70!black},
		ylabel={$\srate$},
		y grid style={white!70!black},
		ymajorgrids,
		xmajorgrids,
		xlabel={Delay coefficient $c_\text{del}$},
		label style={font=\small},
		tick label style={font=\footnotesize},
		xtick={0.2, 0.5, 0.8},
		xticklabels={0.2, 0.5, 0.8}
		]
		\addplot [red, mark=triangle*, mark size=3.5]
		table {%
			0.2 0.99
			0.5 0.97
			0.8 0.81
		};
	\end{axis}
	
\end{tikzpicture}}
	\caption{Experiments with heuristic control~\eqref{eq:control-heuristic} and single integrator model.}
	\label{fig:heuristic}
\end{figure}
			%!TEX root=../main.tex

\begin{figure*}[ht]
	\centering
	\subfloat[Coefficient $c_\text{del}=0.2$.
	\label{fig:safety_scalability_0,2}]{
		%!TEX root=../main.tex

\begin{tikzpicture}[scale=.47]
	
	\begin{axis}[
		legend cell align={left},
		legend style={fill opacity=.75, draw opacity=1, text opacity=1, font=\LARGE},
		legend pos=south west,
		tick align=inside,
		tick pos=left,
		x grid style={white!70!black},
		xlabel={Number of robots $R$},
		xmajorgrids,
		xtick style={color=black},
		y grid style={white!70!black},
		ylabel={$\srate$},
		ymajorgrids,
		ytick style={color=black},
		tick label style={font=\Large},
		label style={font=\LARGE}
		]
		\addplot [semithick, mark=*, mark size=5, mark options={solid}, red]
		table {%
			10 .99
			15 1.
			18 .97
			20 .99
		};
		\addlegendentry{GNN controller}
		\addplot [semithick, mark=triangle*, mark size=5, mark options={solid}]
		table {%
			10 .99
			15 .94
			18 .89
			20 .76
		};
		\addlegendentry{Heuristic}
	\end{axis}
	
\end{tikzpicture}
	}
	\hfil
	\subfloat[Coefficient $c_\text{del}=0.5$.
	\label{fig:safety_scalability_0,5_arx}]{
		%!TEX root=../main.tex

\begin{tikzpicture}[scale=0.47]
	
	\begin{axis}[
		legend cell align={left},
		legend style={fill opacity=.75, draw opacity=1, text opacity=1, font=\LARGE},
		legend pos=south west,
		tick align=inside,
		tick pos=left,
		x grid style={white!70!black},
		xlabel={Number of robots $R$},
		xmajorgrids,
		xtick style={color=black},
		y grid style={white!70!black},
		ylabel={$\srate$},
		ymajorgrids,
		ytick style={color=black},
		ytick={0.2, 0.3, 0.4, 0.5, 0.6, 0.7, 0.8, 0.9, 1},
		xtick={10, 13, 15, 18},
		tick label style={font=\Large},
		label style={font=\LARGE}
		]
		\addplot [semithick, mark=*, mark size=5, mark options={solid}, red]
		table {%
			10 .98
			13 .96
			15 .84
			18 .45
		};
		\addlegendentry{GNN controller}
		\addplot [semithick, mark=triangle*, mark size=5, mark options={solid}]
		table {%
			10 .97
			13 .82
			15 .77
			18 .24
		};
		\addlegendentry{Heuristic}
	\end{axis}
	
\end{tikzpicture}
	}
	\hfil
	\subfloat[Coefficient $c_\text{del}=0.8$.
	\label{fig:safety_scalability_0,8_arx}]{
		%!TEX root=../main.tex

\begin{tikzpicture}[scale=0.47]
	
	\begin{axis}[
		legend cell align={left},
		legend style={fill opacity=.75, draw opacity=1, text opacity=1, font=\LARGE},
		legend pos=south west,
		tick align=inside,
		tick pos=left,
		x grid style={white!70!black},
		xlabel={Number of robots $R$},
		xmajorgrids,
		xtick style={color=black},
		y grid style={white!70!black},
		ylabel={$\srate$},
		ymajorgrids,
		ytick style={color=black},
		ytick={0.2, 0.3, 0.4, 0.5, 0.6, 0.7, 0.8, 0.9, 1},
		ymax=1.05,
		ymin=.15,
		xtick={10, 12, 15},
		tick label style={font=\Large},
		label style={font=\LARGE}
		]
		\addplot [semithick, mark=*, mark size=5, mark options={solid}, red]
		table {%
			10 .93
			12 .71
			15 .31
		};
		\addlegendentry{GNN controller}
		\addplot [semithick, mark=triangle*, mark size=5, mark options={solid}]
		table {%
			10 .79
			12 .54
			15 .29
		};
		\addlegendentry{Heuristic}
	\end{axis}
	
\end{tikzpicture}
	}
	\caption{Safety rate when increasing robot density in test ($R=10$ in training).}
	\label{fig:safety_scalability}
\end{figure*}
\begin{figure*}
	\centering
	\subfloat[Coefficient $c_\text{del}=0.2$.
	\label{fig:performance_scalability_0,2}]{
		%!TEX root=../main.tex

\begin{tikzpicture}[scale=0.47]
	
	\begin{axis}[
		tick align=inside,
		tick pos=left,
		x grid style={white!70!black},
		xlabel={Number of robots $R$},
		xmajorgrids,
		xtick style={color=black},
		y grid style={white!70!black},
		ylabel={Trajectory length},
		label style={font=\LARGE},
		tick label style={font=\Large},
		ymajorgrids,
		ytick style={color=black},
		ymin=315,
		ymax=425,
		xtick={10, 15, 18, 20}
		]
		\addplot [mark=*, mark size=5, mark options={solid}]
		table {%
			10 332.65
			15 391.25
			18 408.89
			20 418.33
		};
	\end{axis}
	
\end{tikzpicture}
	}
	\hfil
	\subfloat[Coefficient $c_\text{del}=0.5$.
	\label{fig:performance_scalability_0,5}]{
		%!TEX root=../main.tex

\begin{tikzpicture}[scale=0.47]
	
	\begin{axis}[
		tick align=inside,
		tick pos=left,
		x grid style={white!70!black},
		xlabel={Number of robots $R$},
		xmajorgrids,
		xtick style={color=black},
		y grid style={white!70!black},
		ylabel={Trajectory length},
		label style={font=\LARGE},
		tick label style={font=\Large},
		ymajorgrids,
		ytick style={color=black},
		ymin=325,
		ymax=395,
		ytick distance=10,
		xtick={10, 13, 15, 18}
		]
		\addplot [mark=*, mark size=5, mark options={solid}]
		table {%
			10 338.86
			13 343.96
			15 363.43
			18 384.66
		};
	\end{axis}
	
\end{tikzpicture}
	}
	\hfil
	\subfloat[Coefficient $c_\text{del}=0.8$.
	\label{fig:performance_scalability_0,8}]{
		%!TEX root=../main.tex

\begin{tikzpicture}[scale=0.47]
	
	\begin{axis}[
		tick align=inside,
		tick pos=left,
		x grid style={white!70!black},
		xlabel={Number of robots $R$},
		xmajorgrids,
		xtick style={color=black},
		y grid style={white!70!black},
		ylabel={Trajectory length},
		label style={font=\LARGE},
		tick label style={font=\Large},
		ymajorgrids,
		ytick style={color=black},
		ymin=305,
		ymax=355,
		xtick={10, 12, 15}
		]
		\addplot [mark=*, mark size=5, mark options={solid}]
		table {%
			10 316.25
			12 324.8
			15 346.12
		};
	\end{axis}
	
\end{tikzpicture}
	}
	\caption{Average trajectory length with heuristic control~\eqref{eq:control-heuristic} when increasing robot density in test ($R=10$ in training).}
	\label{fig:performance_scalability}
\end{figure*}
			%!TEX root=../main.tex

\myParagraph{Goal-Reaching}
While the GNN-based controller $\pi_{\xi}$ provides a satisfactory safety level,
we numerically observe that it hardly outputs zero corrective actions $\upert[t]{i}$.
This causes the robots to settle off their respective goal locations.
Figure~\ref{fig:performance} shows the average robot-to-goal distance after the episode deadline expires under single-integrator model
both with perfect information-exchange (solid line) and realistic information-exchange (marks).
Visual inspection reveals that robots settle close to goals with a steady-state error.
The latter increases with communication delays,
meaning that the robots apply stronger corrections to compensate for more outdated data.
The dashed blue line shows the target distance below which the goal is reached.
For context,
robots are simulated as circles with radius $0.05$ as shown by the red dotted line.

We explore how the heuristic switching controller~\eqref{eq:control-heuristic} described in \autoref{sec:heuristic} can counteract this effect.
Recall that this heuristics may not retain any safety guarantee of the GNN-based distributed controller $\controlmodel$.
For this experiment,
we deploy the controller trained with delay coefficient $c_\text{del} = 0.5$ in all tests and set $\varphi=\{0.8,1,1.1\}$ in~\eqref{eq:control-heuristic} to test with $c_\text{del}=\{0.2,0.5,0.8\}$,
respectively,
to reflect the different delay distributions.
Using~\eqref{eq:control-heuristic},
the robots almost always reach their respective goals before the deadline.
The average trajectory length (number of steps) from start to goal using controller~\eqref{eq:control-heuristic} are shown in~\cref{fig:performance-heuristics}.
The trajectory lengths increase with delays,
as expected because robots need to act more cautiously.
On the other hand,
\cref{fig:safety-heuristics} shows that the safety rates are lower than those obtained with the trained controller and shown in~\autoref{fig:safety}.
This is because the heuristic control~\eqref{eq:control-heuristic} sacrifices safety to achieve better goal-reaching performance.
In particular,
the predictor introduces extra uncertainty when estimating the CBF derivative in~\eqref{eq:control-heuristic},
which causes safety degradation especially with long delays.

To get a sense of the performance loss due to communication delays,
the dashed line in \cref{fig:performance-heuristics} shows the average trajectory length obtained under perfect-information exchange
and by computing the CBF derivative in~\eqref{eq:control-heuristic} in a centralized fashion.\footnote{
	The delay coefficients on the $x$-axis are irrelevant for this plot since instantaneous communication is implemented.
}
This case is however impractical either under communication delays or if the robots do not know each other's goals
(which here are sufficient to compute nominal control inputs),
and we show it just for comparison as upper bound for performance.
			%!TEX root=../main.tex

\myParagraph{Scalability for realistic information-exchange}
Differently from the perfect-information exchange,
for which we study scalability in \autoref{fig:safety},
increasing the number of robots can negatively affect scalability in the presence of communication delays.
In fact,
by increasing \emph{density} of robots in the workspace,
model~\eqref{eq:comm-delays-sim} generally produces longer delays because robots communicate with more neighbors.
This means that increasing the number of robots also causes a distribution shift from delays observed during training.

Figure~\ref{fig:safety_scalability} shows the safety rates obtained in test under single integrator model when increasing the robot density (\ie number of robots with fixed workspace) from training settings.
We show safety rates with both the GNN-based controller~\eqref{eq:controller-gnn} (label ``GNN controller'')
and the heuristic controller~\eqref{eq:control-heuristic} (label ``Heuristic'').
For ``GNN controller'',
we deploy the models trained with delay coefficient $c_\text{del}=0.5$ in the tests with $c_\text{del}=0.2$ and with $c_\text{del}=0.5$,
while we deploy the models trained with $c_\text{del}=0.8$ in the test with $c_\text{del}=0.8$.
For ``Heuristic'',
we use only the models trained with $c_\text{del}=0.5$.
While the controller is able to generalize till double the robot density experienced in training when tested with $c_\text{del}=0.2$,
a safety degradation can be observed with the other two delay coefficients as the system is made more cluttered.
This is not surprising because,
while acceptable safety levels are maintained for (slightly) denser scenarios,
the controller cannot generalize to arbitrarily longer delays.
Moreover,
we see that ``Heuristic'' provides smaller safety rates because it lacks statistical guarantees and pushes on performance.
Therefore,
this study suggests that a careful assessment of communication delays is crucial to train the predictor-controller models in an effective manner.

We also study how performance changes as robot density increases.
Figure~\ref{fig:performance_scalability} shows the average lengths of trajectories run by robots to reach their goals under the heuristic control~\eqref{eq:control-heuristic},
analogously to the black circles in~\cref{fig:performance-heuristics}.
More cluttered environments force robots to navigate longer paths to reach their goals,
which agrees with intuition.
	
	%!TEX root = ../main.tex

\section{Limitations and Future Research}
\label{sec:limitations}

While our theoretical results and methodology are quite general, 
we have demonstrated it only for the task of collision avoidance under single integrator and Dubins car dynamic models.
Showing applicability of our methodology to other tasks and dynamics is an important research avenue to be explored in future work.
In particular,
we have identified the prediction as a serious bottleneck for safety under realistic information-exchange for more complicated dynamics,
which urges more research to apply the proposed framework to real-world applications.
This could involve a different predictor architecture or embedding prediction uncertainty,
such as via Gaussian Processes or Stochastic Barrier Functions~\cite{Mathiesen23lcss-neuralBarrierStochastic}.
Alternatively,
a different predictor-controller interplay may be thought so as to enhance prediction capabilities via a tighter coordination of control actions.
Another point that the present work does not address is formal verification of the leaned distributed CBF,
which is crucial for deployment in safety-critical applications.
\revision{Finally,
the promising results with the predictor-based framework open a research avenue to controlling general delay systems,
and could be integrated with actuation delays in either model- or learning-based fashion.}
	%!TEX root = ../main.tex

\section{Conclusion}
\label{sec:conclusion}

We have proposed distributed control barrier functions for networked autonomous systems
and studied a learning-based approach based on GNN models to jointly learn a distributed CBF and a safe distributed controller.
We have empirically shown that communication delays can disrupt safety certification if not accounted for,
and that a predictor-based approach where a predictor model is alternatively trained with the CBF and controller in the presence of delays can restore safety.
In particular,
using AoI of exchanged information has proven useful to handle delays.
This allows for a simple, yet principled, approach to solve other control problems with delays,
which are typically a challenge in controller design.

%In this letter,
%we have formalized a generalization of Control Barrier Functions for networked autonomous systems,
%and proposed a learning-based approach based on Graph Neural Networks to learn such a function and associated safe distributed controller.
%Moreover,
%we have addressed the far-from-ideal case of communication delays and proposed a predictor-based control design to handle that scenario.
	
	\appendices
	%!TEX root = ../../main.tex

\section{Proof of \cref{thm:distr-cbf}}
\label[appendix]{app:proof}

Characterization~\eqref{eq:safe-set-local} follows directly from~\cref{ass:safe-set,ass:dist} and the definition of $\neigh{i}$. 
In view of~\cref{def:distributed-cbf} and \cref{ass:dist},
if all control inputs are such that $\un{i}\in\calU_\safe(\xn{i}) \ \forall i$ and as long as no neighborhood change,
then $h$ is differentiable and $\safe$ is forward invariant by~\cref{thm:cbf}.
We need to prove that the $\safe$ is forward invariant when neighborhoods change.
Assume $\rho(\vp_i-\vp_j)>0$ (\ie $j\notin\neigh{i}$) at time $t$ and $\rho(\vp_i-\vp_j)\le0$ (\ie $j\in\neigh{i}$) at time $t+\epsilon$ for some $\epsilon>0$.
Because $\rho$ and $\vp$ are continuous, %with time,
there exists $\epsilon_1 \in (0,\epsilon)$ such that $\rho(\vp_i-\vp_j) = 0$ at time $t+\epsilon_1$.
Thus,
$j\in\neigh{i}$ at time $t+\epsilon_1$ and,
by \cref{ass:dist},
if $h(\xn{i}) \ge 0 $ at time $t$ and $\un{i}\in\calU_\safe(\xn{i})$ for every $t'\in[t,t+\epsilon_1)$,
then $h(\xn{i})\ge0$ at time $t+\epsilon_1$. % where $\xn[']{i}$ stacks states in $\neigh{i}\cup\{i,j\}$.
Hence,
the set $\neigh{i}\cup\{i\}$ with $j\in\neigh{i}$ complies with $\vxx\in\safe$ at time $t+\epsilon_1$ and condition~\eqref{eq:control-distributed-cbf} applied to this new neighborhood ensures $h(\xn{i})\ge0$ for $t'\ge t+\epsilon_1$.
	%!TEX root = ../../main.tex

\section{Learning Architecture used for Experiments}
\label[appendix]{app:exp-models}

\myParagraph{Predictor}
The predictor $\predmodel$ estimates current states differences $\predval{ij}{t}$ between robots $i$ and $j$,
which are fed to the distributed CBF model $\cbfmodel$ and distributed controller model $\controlmodel$.
To leverage data streams received from neighbors,
we parametrize the predictor as a recurrent neural network to learn correlations across message sequences in $\datarel{i}{j}(t)$.
We use PyTorch \texttt{LSTM} with four layers,
hidden size $256$,
and dropout probability $0.1$.
Also,
we discard all messages with AoI greater than $\aoimax = 5$ steps for the single integrator model and $\aoimax=10$ steps for the Dubins Car model.

\myParagraph{CBF and Controller}
We design $\cbfmodel$ and $\controlmodel$ as compositions of one GNN layer followed by an MLP.
In the perfect information-exchange setting,
when current relative state information $\xn[t]{i}$ is readily available,
we set
\begin{gather}
	\cbfval[\paramcbf]{\xn[t]{i}} = \mlp_h\lr\feat{i}{(K)}(t)\rr\label{eq:cbf-cascade-1}\\
	\control{\xn[t]{i}} = \mlp_\pi\lr\feat{i}{(K)}(t), \uref[t]{i}\rr\label{eq:controller-cascade-1},
\end{gather}
where we set $K=1$ and the GNN model $\gnn$ extracts features at time $t$ as (\cf~\eqref{eq:message-passing})
\begin{equation}\label{eq:message-passing-sim}
	\begin{aligned}
		\feat{i}{(k+1)}(t) &=\gamma \lr \feat{i}{(k)}(t), \aggr_{j\in\robset : \data{i}{j}(t)\neq\emptyset} \phi\lr\xn[t]{ij}\rr \rr\\
		\feat{i}{(K)}(t) &= \gnn\lr\xn[t]{i}\rr.
	\end{aligned}
\end{equation}
The models $\mlp_h$ and $\mlp_\pi$ are MLPs that map the extracted features to CBF values and corrective control inputs,
respectively.
Under realistic information-exchange,
the relative states $\xn[t]{ij}$ are replaced by their estimated values $\predval{ij}{t}$ computed by the predictor.
We also feed the reference signal $\uref[t]{i}$ to the controller module $\mlp_\pi$ to enhance performance.
Because we embed within the (estimated) relative states $\xn[t]{i}$ into the edge features between nodes,
we set the initial robot features $\feat{i}{(0)}(t)$ simply as vector of one's for all robots.\footnote{
	While robot-specific information might be included in $\feat{i}{(0)}(t)$,
	exploring this design option is out of scope of the present contribution.}

In our implementation of the GNN model $\gnn$,
we parametrize both $\phi$ and $\gamma$ in~\eqref{eq:message-passing-sim} as MLPs with two hidden layers,
each with $2048$ nodes,
and $\mathrm{ReLu}$ activation functions.
We use an attention-type aggregation function $\aggr$ because,
intuitively,
not all neighbors carry the same importance to safety -- closer neighbors have more chance of colliding than far-away ones.
Specifically,
we use \texttt{AttentionalAggregation}~\cite{Li19icml-attentionalAggregation} whereby the relevance of each neighbor is parametrized via an MLP
(that we implement with two hidden layers with $128$ nodes each and $\mathrm{ReLu}$ activation),
followed by softmax weighing of the neighbors' scores.
Finally,
we parametrize the MLPs $\mlp_h$ and $\mlp_\pi$ in~\eqref{eq:controller-cascade-1}--\eqref{eq:cbf-cascade-1} with three hidden layers respectively with $512$, $128$, and $32$ nodes,
all with $\mathrm{ReLu}$ activation except for the $\mathrm{tanh}$ output activation of $\mlp_h$ that bounds the CBF values within the interval $[0,1]$.

\begin{table}
	\begin{center}
		\small
		\begin{tabular}{lccccccc}
			\toprule
			&$c_\text{del}$ 	& $w_\safe$	& $w_{\unsafe}$	& $w_\text{der}$	& $\alpha$	& $\epsilon$	& $w_\text{contr}$\\
			\midrule
			exp. 1	& $0.5$				& $1$		& $1$			& $0.7$				& $1$		& $0.02$ 		& $0.001$	\\
			exp. 2	& $0.8$				& $0.9$		& $1$			& $0.7$				& $1$		& $0.02$ 		& $0.0005$	\\
			\bottomrule
		\end{tabular}
	\end{center}
	\caption{Parameters and hyperparameters used with single integrator model.}
	\label{table:hyperparams-si}
\end{table}

\begin{table}
	\begin{center}
		\small
		\begin{tabular}{cccccc}
			\toprule
			$w_\safe$	& $w_{\unsafe}$	& $w_\text{der}$	& $\alpha$	& $\epsilon$	& $w_\text{contr}$\\
			\midrule
			$1$			& $1.2$					& $0.5$					& $1$			& $0.02$				& $0.001$	\\
			\bottomrule
		\end{tabular}
	\end{center}
	\caption{Parameters and hyperparameters used with Dubins car model.}
	\label{table:hyperparams-dc}
\end{table}

Given a batch with $S$ samples,
each learning iteration updates the predictor model with the batch loss
\begin{equation}\label{eq:loss-predictor-batch}
	\losspred^S(\parampred) = \frac{1}{S}\sum_{s=1}^S\losspred(\parampred;\x{}^s,\calI^s)
\end{equation}
and the distributed CBF and controller models are jointly trained with the batch loss
\begin{equation}\label{eq:loss-ctrl-batch}
	\calL^S(\paramcbf,\paramcontr) = \frac{1}{S}\sum_{s=1}^S \calL(\paramcbf,\paramcontr;\x{}^s,\calI^s),
\end{equation}
where $\losspred$ and $\calL$ are respectively defined in~\eqref{eq:loss-pred} and~\eqref{eq:loss-cbf-controller},
and the $s$th sample is given by the pair $(\x{}^s,\calI^s)$ that contains respectively the state of the system and received data $\data{i}{j}$ of all robots $i$ for all neighbors $j$.
In the loss term related to the distributed CBF~\eqref{eq:loss-cbf},
we set a linear class $\calK$ function $\alpha(x) = \alpha x$ with slope $\alpha>0$.
Moreover,
we numerically approximate the CBF derivative (that we use only in training) in~\eqref{eq:loss-cbf} via~\eqref{eq:cbf-derivative}.
Parameters and learning hyperparameters used for training are reported in~\cref{table:hyperparams-si,table:hyperparams-dc}.
Again,
when training together predictor model and distributed CBF and distributed controller models,
we replace exact values of $\xn[t]{ij}$ with their predicted values $\predval{ij}{t}$ when computing the distributed CFB $\cbfmodel(\predval{i}{t})$ and the control input $\control{\predval{i}{t}}$,
but we use the ground truth (\ie actual values of $\xn[t]{i}$ and of $\xn[t+T_\text{s}]{i}$) to compute the time derivative of the distributed CBF via~\eqref{eq:cbf-derivative}.
	
	% Generated by IEEEtran.bst, version: 1.14 (2015/08/26)

%	\bibliographystyle{IEEEtran}
%	\bibliography{IEEEabrv,
%		biboptions,
%		dist-self-supervised-control
%	}
	
	%!TEX root = ../main.tex

\begin{IEEEbiography}[{\includegraphics[width=1in,height=1.25in,clip,keepaspectratio]{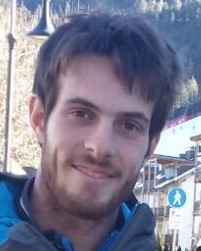}}]{Luca Ballotta}
	received the Master's degree in automation engineering and the Ph.D. degree in information engineering from the University of Padova, Padova, Italy, in 2019 and 2023,
respectively.
He is currently a Postdoctoral Researcher with the Delft Center for Systems and Control (DCSC), Delft University of Technology, Delft, Netherlands.
In 2020 and 2022,
he was Visiting Student at the Massachusetts Institute of Technology, Cambridge, MA, USA.
He was the recipient of the Young Author Prize at the 2020 IFAC World Congress and was also the Finalist of the 2024 EECI Ph.D. Award.
His research interests include multi-agent systems and networked control under delays, 
resilient distributed control and distributed learning,
safe control,
and control under sparsity constraints.
\end{IEEEbiography}

\begin{IEEEbiography}[{\includegraphics[width=1in,height=1.25in,clip,keepaspectratio]{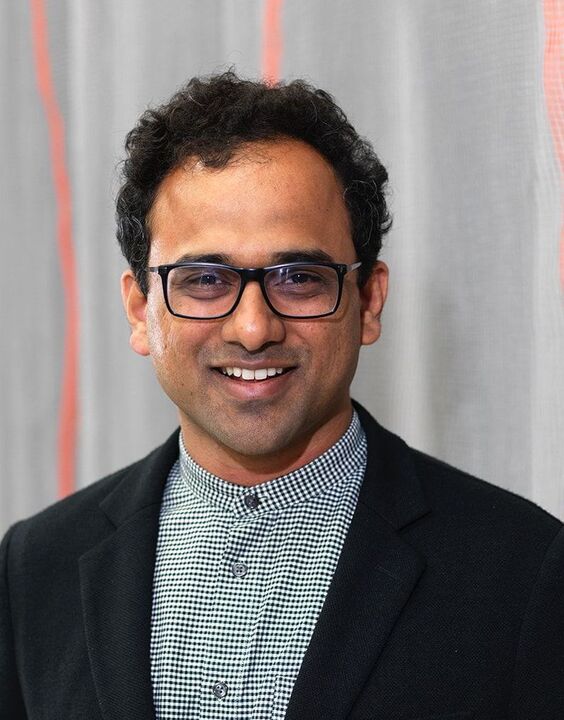}}]{Rajat Talak}
	(Member, IEEE) received the B.Tech. Degree from the Department of Electronics and Communication Engineering, National Institute of Technology Karnataka, Surathkal, India, in 2010, the Master of Science degree from the Department of Electrical Communication Engineering, Indian Institute of Science, Bengaluru, India, in 2013, and the Ph.D. degree from the Laboratory of Information and Decision Systems, Massachusetts Institute of Technology (MIT), Cambridge, MA, USA, in 2020. He is currently a Research Scientist at the Department of Aeronautics and Astronautics, MIT. Prior to this, he was a Post-Doctoral Associate with the Department of Aeronautics and Astronautics, MIT. 
He was the recipient of the Best Paper Award at the ACM MobiHoc 2018 and of the Gold medal for his Master’s thesis.
His research interests are in the field of robot perception, optimization and learning, autonomous systems, and communication networks.
\end{IEEEbiography}
	
\end{document}